\newcommand{\PhiB}{\Phi_{\mathcal{B}}}
\newcommand{\nuR}{\nu_r}
\newcommand{\Xcal}{\mathcal{X}}
\newcommand{\Ycal}{\mathcal{Y}}
\newcommand{\Pcal}{\mathcal{P}}
\newcommand{\Mfrak}{\mathfrak{M}}
\newcommand{\ThetaMan}{\Theta}
\setlist{nosep}
\newtheorem{theorem}{Theorem}[section]
\newtheorem{proposition}[theorem]{Proposition}
\newtheorem{lemma}[theorem]{Lemma}
\newtheorem{corollary}[theorem]{Corollary}
\theoremstyle{definition}
\newtheorem{definition}[theorem]{Definition}
\newtheorem{example}[theorem]{Example}
\theoremstyle{remark}
\newtheorem{remark}[theorem]{Remark}
\newcommand{\RR}{\mathbb{R}}
\newcommand{\NN}{\mathbb{N}}
\newcommand{\EE}{\mathbb{E}}
\newcommand{\1}{\mathds{1}}
\newcommand{\eps}{\varepsilon}
\title{Mathematics and Coding are Universal AI Benchmarks}
\author{Przemyslaw Chojecki \\ \small ulam.ai}
\date{December 15, 2025}
\begin{document}
\maketitle

\begin{abstract}
We study the special role of mathematics and coding inside the moduli space $\Mfrak$ of
psychometric batteries for AI agents. Building on the AAI framework of
\cite{Chojecki2025} and GVU dynamics from \cite{ChojeckiGVU}, we define the
\emph{Mathematics Fiber} $\Mfrak_{\mathrm{math}}$ and show that, when paired with
formal proof kernels (e.g.\ Lean, Coq), GVU flows on this fiber admit
spectrally stable self-improvement regimes due to oracle-like verification.
Our main technical result is a density theorem: under uniform tightness of
agent outputs and a Lipschitz AAI functional, the subspace of batteries
generated by mathematical theorem-proving and coding tasks is dense in
$\Mfrak$ with respect to the evaluation metric $d_{\mathcal{A}_\Theta}$.
Coding alone is universal in this sense, while pure mathematics is not; its
privilege is spectral rather than expressive. We interpret this as evidence
that mathematics and coding provide ``universal coordinates'' for evaluation,
and that formal mathematics is a natural ignition domain for recursive
self-improvement in advanced AI agents.
\end{abstract}

% ============================================================

\section{Introduction}\label{sec:intro}

Large language models and deep reinforcement learning systems are now
routinely evaluated on batteries of tasks: structured test suites that probe
capabilities such as reasoning, coding, tool use, and social interaction.
In \cite{Chojecki2025} I proposed treating such batteries as points in a
\emph{moduli space} $\Mfrak$, with each battery
$\mathcal{B}$ inducing a representation map
$\rho_{\mathcal{B}} : \Theta \to \Pcal(X_{\mathcal{B}})$
from agent parameters to distributions over evaluation outcomes, and an
associated AAI capability functional
$\Phi_{\mathcal{B}} : \Pcal(X_{\mathcal{B}}) \to \RR$. This perspective
allows us to compare agents across heterogeneous benchmarks by studying
their image in $\Mfrak$, and to define distances between batteries via
the induced discrepancies in capability curves.

Subsequently, \cite{ChojeckiGVU} introduced the
\emph{Generator--Verifier--Updater (GVU)} formalism as a canonical engine of
self-improvement. In GVU, an agent with parameters $\theta \in \Theta$:
(i) generates traces on a battery via its current policy
$\pi_\theta$; (ii) scores them using an internal potential $V$ that plays the
role of a Verifier; and (iii) updates $\theta$ by a first-order rule derived
from these scores. Under mild regularity, any first-order,
sample-based update can be written in this form, and the resulting flow
$\theta_r$ induces a capability trajectory
$r \mapsto \Phi_{\mathcal{B}}(\rho_{\mathcal{B}}(\theta_r))$.
A \emph{Variance Inequality} then constrains when this trajectory has
positive drift: the mean GVU update must be sufficiently aligned with the
true gradient of $\Phi_{\mathcal{B}}$ and have sufficiently small generator
and verifier noise.

This paper combines these two threads. Rather than treating mathematics as
just another axis in a large benchmark, I study it as a \emph{structurally
privileged fiber} inside the moduli space of batteries. Concretely, I define
a Mathematics Fiber $\Mfrak_{\mathrm{math}} \subset \Mfrak$ consisting of
batteries whose tasks are mathematical problems or theorem-proving goals,
equipped with either informal scoring (natural-language solutions), semantic
scoring (autoformalization plus checking), or formal scoring (kernel-checked
proofs in systems such as Lean or Coq). Within $\Mfrak_{\mathrm{math}}$ I
consider the immersion of large proof libraries (e.g.\ mathlib) as monotone
maps on capabilities, and I analyze GVU flows that are confined to this
fiber or that use it as a high-SNR subroutine.

A central observation, sharpened here into precise statements, is that
\emph{formal mathematics yields oracle-like verifiers}. Given a candidate
proof trace $\omega$ for a theorem $\tau$, the proof kernel either accepts
or rejects it with near-zero stochastic noise. In the language of
\cite{ChojeckiGVU}, this drives the verification variance
$\sigma_{\mathcal{V}}^2$ toward zero, thereby widening the stepsize window
for stable self-improvement and pushing the system far from the
``Hallucination Barrier'' characteristic of purely linguistic self-correction.
This provides a geometric explanation for why neuro-symbolic systems such as
AlphaProof, AlphaGeometry, DeepSeekMath-V2 and Lean-based agents can bootstrap
effectively on mathematical benchmarks: their GVU loops lie in a spectrally
favorable regime on $\Mfrak_{\mathrm{math}}$.

The second, and more surprising, theme of the paper concerns
\emph{universality}. In Section~\ref{sec:universality} I prove a density
theorem for the moduli of batteries: under mild assumptions on agent
classes (uniform tightness of output distributions) and on the AAI
functional (Lipschitz continuity with respect to a bounded-Lipschitz metric
on evaluation laws), the subspace of batteries generated by
\emph{mathematical theorem-proving and coding/programming tasks} is dense
in $\Mfrak$ with respect to the evaluation metric
\[
  d_{\mathcal{A}_{\Theta}}(\mathcal{B}_1,\mathcal{B}_2)
  :=
  \sup_{\theta \in \mathcal{A}_{\Theta}}
  \big|
    (\Phi_{\mathcal{B}_1}\circ\rho_{\mathcal{B}_1})(\theta)
    -
    (\Phi_{\mathcal{B}_2}\circ\rho_{\mathcal{B}_2})(\theta)
  \big|.
\]
In informal terms: for any reasonable benchmark and any reasonable agent
class, there exists a battery built purely from math and coding tasks whose
capability curve is arbitrarily close. Math+coding thus form
\emph{universal coordinates} on the moduli space of batteries.

The proof proceeds in two stages. In a one-shot model, I show that coding
scorers can implement singleton indicators on trace space by forcing the
agent's output to act as a self-printing program, so their linear span
contains all finite-support functions. Combined with a uniform tightness
assumption on agent outputs, this yields density of the
``math+coding algebra'' in the space of scorers, in the sense of uniform
approximation over agents. I then lift this to general batteries by
controlling the bounded-Lipschitz distance between evaluation laws via a
simple coupling argument and invoking the Lipschitz property of the AAI
functional.

An important asymmetry emerges: \emph{coding alone is already universal}, in
that coding scorers can distinguish arbitrary traces, whereas pure
mathematics is not. Proof kernels fundamentally treat most non-proofs as
indistinguishable zeros, so the algebra they generate cannot approximate
scorers that care about the fine-grained structure of arbitrary output
strings. Mathematics is therefore not needed for expressive universality,
but it provides an unusually rich, low-entropy, oracle-verifiable fiber on
which GVU self-improvement is spectrally well-conditioned.

Overall, the paper argues that mathematics and coding are not merely
convenient benchmarks but occupy a geometrically and spectrally privileged
position in the moduli of batteries: coding supplies expressive universality
on trace space, formal mathematics supplies oracle-like verifiers and rich
symbolic structure, and together they provide natural coordinates in which
to study capability, self-improvement, and scalable oversight for advanced
AI agents.

From a practitioner's perspective, the math+code universality result has a concrete implication: 
it suggests a route to \emph{cheaper}, more \emph{data-efficient} training. If mathematics and, especially, coding tasks form a 
dense skeleton in the moduli space of batteries, then in principle a large fraction of ``real-world'' evaluation can be 
approximated by carefully designed code-centric benchmarks whose outcomes are cheaply checkable by compilers, 
test suites and formal tools. For LLMs and AI agents, this means that better, more compact coding datasets could eventually 
replace sprawling, noisy multi-domain corpora as the primary medium of supervision, concentrating training compute on a domain 
that is both information-dense and automatically verifiable, while still exercising behaviours that generalize across the wider 
task space.

% ============================================================
\section{Preliminaries: Batteries and Their Moduli}\label{sec:prelim}

We briefly recall the main elements of the battery framework from
\cite{Chojecki2025}, restricting attention to the aspects needed in
this paper.

\subsection{Batteries and representations}

Let $\Sigma$ denote a finite alphabet of tokens and $\Sigma^\ast$ the
Kleene closure. We regard all prompts, traces and textual artefacts as
elements of $\Sigma^\ast$.

\begin{definition}[Battery]\label{def:battery}
A \emph{battery} is an octuple
\[
  \mathcal B=(T,\ \mathcal F,\ \mathsf S,\ Q^*,\ \mu,\ \mathsf D,\ \Pi,\ \mathsf R),
\]
where:
\begin{itemize}
  \item $T$ is a finite set of tasks, partitioned into families
    $\mathcal F=\{F_k\}$.
  \item For each $t\in T$, $\mathsf S_t:\Omega_t\to[0,1]$ is a
    scoring map, where $\Omega_t\subseteq\Sigma^\ast$ is the space
    of valid semantic traces for task $t$.
  \item $Q^\ast:T\to[0,1]$ assigns threshold scores.
  \item $\mu$ is a sampling law on $T\times\Pi\times\mathsf D$, where
    $\Pi$ are seeds and $\mathsf D$ are drifts.
  \item $\mathsf R\simeq\RR^{d_R}_{\ge0}$ are resource coordinates
    (tokens, wall-clock, calls, cost).
\end{itemize}
\end{definition}

As in \cite{Chojecki2025}, the battery induces an \emph{input space}
$\Xcal$ of prompted tasks and an \emph{output space} $\Ycal$ of traces
with resource annotations. For an agent $\mathcal A$ (or parameter
state $\theta$) we obtain a Markov kernel $\pi_\theta$ from $\Xcal$ to
$\Ycal$ and, composing with the scoring, a representation measure
$\rho_{\mathcal B}(\theta)\in\Pcal(X_{\mathcal B})$, where
$X_{\mathcal B} := [0,1]^T \times \mathbb{R}_{\ge 0}^{d_R}$ is the evaluation space of per-task observables.

A capability functional $\PhiB:\Pcal(X_{\mathcal B})\to\RR$ satisfying
the axioms of \cite{Chojecki2025} then induces a scalar capability
\[
F_{\mathcal B}(\theta):=
\PhiB(\rho_{\mathcal B}(\theta)).
\]

\subsection{The moduli space of batteries}

We write $\Mfrak$ for the (coarse) moduli space of batteries, defined as
the set of isomorphism classes of batteries in the category
$\mathbf{Bat}$,
or equivalently the quotient
\[
  \Mfrak \;:=\; \mathbf{Bat}/G
\]
by the evaluation-preserving symmetry group $G$.
In particular, two batteries represent the same point of $\Mfrak$ if they
are related by a symmetry that preserves their evaluation structure
(task families, thresholds, score reparameterizations, and resource
scaling); this definition is entirely \emph{formal} and does not depend
on any choice of agents or AAI functional.

Following \cite{Chojecki2025}, we endow $\Mfrak$ with coarse structure coming
from its canonical representatives:
\begin{itemize}
  \item a stratification by discrete \emph{skeletons} (task family
        structure, anchor choices, threshold ordering);
  \item on each stratum, continuous parameters given by threshold
        vectors, PIT-score copulas (with the $W_1$ topology), and
        projective resource rays;
  \item induced notions of topology and, when needed, metrics such as
        the canonical moduli metric $d_{\Mfrak}$.
\end{itemize}

Only \emph{after} fixing an agent family with parameter space
$\ThetaMan$ and a regular AAI functional
$\{\Phi_{\mathcal B}\}_{\mathcal B\in\Mfrak}$ do batteries acquire
evaluation maps
\[
  F_{\mathcal B} : \ThetaMan \longrightarrow \RR,\qquad
  \theta \longmapsto \Phi_{\mathcal B}\big(\rho_{\mathcal B}(\theta)\big),
\]
where $\rho_{\mathcal B}(\theta)$ is the canonical representation of the
agent on $\mathcal B$. For a fixed
$\theta$, the assignment
\[
  \Mfrak \ni \mathcal B \longmapsto F_{\mathcal B}(\theta)
\]
is then the agent's \emph{capability profile} over the (formal) moduli
space. Note that this evaluation structure is additional data on top of
$\Mfrak$ and is not part of the definition of the moduli itself.

\subsection{GVU dynamics and the self-improvement coefficient}

The GVU framework \cite{ChojeckiGVU} equips $\ThetaMan$ with
dynamics. A Generator--Verifier--Updater operator
\[
\mathcal T_{\mathrm{GVU}}:\ThetaMan\to\ThetaMan
\]
decomposes a single self-improvement step into sampling behaviour
(Generator), scoring it with an internal potential (Verifier), and
performing a first-order update (Updater). Iterating
$\theta_{r+1}=\mathcal T_{\mathrm{GVU}}(\theta_r)$ yields a flow
$(\theta_r)_{r\ge0}$; composing with a battery functional gives a
capability curve
\[
F_{\mathcal B}(\theta_r)
=
(\PhiB\circ\rho_{\mathcal B})(\theta_r).
\]

The \emph{self-improvement coefficient} on battery $\mathcal B$ is
then the derivative
\[
\kappa_{\mathcal B}(r):=
\frac{d}{dr}F_{\mathcal B}(\theta_r)
\]
when it exists, or a finite-difference surrogate. Informally,
$\kappa_{\mathcal B}(r)>0$ means that at resource scale $r$ the agent
is converting additional computation into capability gains on
$\mathcal B$.

In this paper we are interested in $\kappa_{\mathcal B}(r)$ when
$\mathcal B$ lies in a particular region of $\Mfrak$ associated with
mathematics.

% ============================================================
\section{The Mathematics Fiber}\label{sec:math-fiber}

We now define the portion of the moduli space that corresponds to
mathematical tasks.

\subsection{Mathematical tasks and formality strata}

At a high level, we distinguish three levels of formality.

\begin{definition}[Formality levels for mathematics]\label{def:formality}
Let $T$ be a set of tasks whose prompts and traces are concerned with
mathematical content. We say that a task $t\in T$ is:
\begin{itemize}
  \item \emph{Informal} if its input is natural language (possibly
  with LaTeX) and its scoring map $\mathsf S_t$ depends only on
  human-readable arguments, without reference to a formal system.
  Examples include free-form Olympiad solutions judged by humans.
  \item \emph{Semi-formal} if its inputs or outputs include partially
  structured artefacts (annotated proofs, sketches, step-wise
  rationales) and $\mathsf S_t$ may involve consistency checks or
  step-level validators (e.g.\ PRMs), but not a full proof kernel.
  \item \emph{Fully formal} if its traces $\omega\in\Omega_t$ are
  programs or proof scripts in a fixed proof assistant (e.g.\ Lean),
  and $\mathsf S_t$ factors through the proof-assistant kernel:
  it is $1$ if the kernel accepts and $0$ otherwise, possibly with
  additional penalties for resource usage.
\end{itemize}
\end{definition}

Let $\Mfrak_{\mathrm{math}}^{\mathrm{inf}}$,
$\Mfrak_{\mathrm{math}}^{\mathrm{semi}}$ and
$\Mfrak_{\mathrm{math}}^{\mathrm{formal}}$ denote the sets of
batteries all of whose tasks are of the corresponding formality type.

\begin{definition}[Mathematics Fiber]\label{def:math-fiber}
The \emph{Mathematics Fiber} $\Mfrak_{\mathrm{math}}\subset\Mfrak$ is
the union of batteries whose tasks are mathematical in the sense above:
\[
  \Mfrak_{\mathrm{math}}
  :=
  \Mfrak_{\mathrm{math}}^{\mathrm{inf}}
  \cup
  \Mfrak_{\mathrm{math}}^{\mathrm{semi}}
  \cup
  \Mfrak_{\mathrm{math}}^{\mathrm{formal}}.
\]
We call the three subspaces the \emph{informal}, \emph{semi-formal}
and \emph{formal} strata of the Mathematics Fiber.
\end{definition}

\begin{remark}[Topic stratification]
Each battery in $\Mfrak_{\mathrm{math}}$ also carries disciplinary
labels (algebra, geometry, analysis, etc.) via its task families
$\mathcal F$. Taking these labels into account yields a finer
stratification of $\Mfrak_{\mathrm{math}}$ into topic-wise components,
but for this paper the coarse formality stratification will suffice.
\end{remark}

\subsection{Examples}

We record a few canonical examples.

\begin{example}[Competition mathematics batteries]
A battery whose tasks are competition problems (e.g.\ IMO, Putnam)
with solutions scored by exact numeric answers or short formal
justifications lies in $\Mfrak_{\mathrm{math}}^{\mathrm{inf}}$ if
solutions are judged by humans, or in
$\Mfrak_{\mathrm{math}}^{\mathrm{semi}}$ if consistency heuristics
or step-verification tools are used.
\end{example}

\begin{example}[Lean theorem proving batteries]
Benchmarks such as miniF2F, ProofNet and LeanDojo induce batteries in
$\Mfrak_{\mathrm{math}}^{\mathrm{formal}}$: tasks are Lean goals,
traces are Lean proof scripts, and scoring is derived from the Lean
kernel's acceptance of the proof, possibly augmented with proof length
and time.
\end{example}

\begin{example}[Autoformalization batteries]
Pairs of informal statements and formal targets (e.g.\ LaTeX theorem
statements and their Lean forms) induce batteries at the semi-formal
or formal level, depending on whether the traces are purely formal
objects or include auxiliary informal reasoning steps.
\end{example}

In all of these cases, the key property is that the semantics of the
task are unambiguously mathematical, and there exists (at least in
principle) a corresponding formalization in one or more proof
systems.

% ============================================================
\section{Formal Mathematics Batteries and Oracle Verifiers}\label{sec:oracle-verifier}

We now focus on the fully formal stratum
$\Mfrak_{\mathrm{math}}^{\mathrm{formal}}$ and show that these
batteries induce oracle-like Verifiers in the GVU framework.

\subsection{Formal mathematics batteries}

\begin{definition}[Formal mathematics battery]\label{def:formal-battery}
A battery $\mathcal B\in\Mfrak_{\mathrm{math}}$ is a
\emph{formal mathematics battery} if:
\begin{enumerate}
  \item There is a fixed proof assistant with kernel $K$ and a fixed
  library of primitives (axioms and previously verified theorems).
  \item For each $t\in T$, the domain $\Omega_t$ of traces consists of
  proof scripts or programs in the language of this assistant.
  \item The scoring map $\mathsf S_t$ takes the form
    \[
      \mathsf S_t(\omega)
      = s_t\big(K(\omega)\big),
    \]
    where $K(\omega)\in\{\mathsf{accept},\mathsf{reject}\}$ is the
    kernel's decision and $s_t:\{\mathsf{accept},\mathsf{reject}\}\to[0,1]$ is a deterministic
    post-processing function (for example, $s_t(\mathsf{accept})=1$,
    $s_t(\mathsf{reject})=0$ or a variant that penalizes resources).
\end{enumerate}
We write $\Mfrak_{\mathrm{math}}^{\mathrm{formal},K}$ for the
collection of formal mathematics batteries associated with a
particular kernel $K$.
\end{definition}

In this setting, for each task the \emph{semantic} question ``is the
proof correct relative to the library and axioms?'' has a unique,
machine-checkable answer.

\subsection{The Oracle Nature of Formal Scoring}\label{subsec:oracle-formal}

Recall from \cite{ChojeckiGVU} that, for a fixed architecture
$(\ThetaMan,\Pi_{\ThetaMan})$, the GVU operator
$\mathcal T_{\mathrm{GVU}}$ decomposes an update into a Generator
$\mathcal G_\theta$, a Verifier $\mathcal V_\theta$ and an Updater
$\mathcal U_\theta$. The verifier is given as part of the architecture:
for each $\theta\in\ThetaMan$ there is a fixed potential
\[
  V_\theta : \Xcal\times\Ycal \longrightarrow \RR,
\]
and the update uses $V_\theta(x,y)$ to reweight traces $y$ drawn from
the generator at input $x$.

In informal or human-judged domains (natural language, social interaction,
creative writing), the ``truth'' or quality of a trace is itself a random
quantity: the same $\omega$ may receive different scores from different
raters, or even from the same rater at different times. In GVU terms, the
verification channel carries intrinsic noise: even conditioning on the
sampled behaviour, the evaluation remains stochastic.

On the formal Mathematics Fiber
$\Mfrak_{\mathrm{math}}^{\mathrm{formal},K}$ the situation is
structurally different. Once the trace $\omega$ is fixed, the score
assigned by the battery is \emph{completely determined} by the kernel $K$;
all remaining uncertainty comes from which trace the generator produced, not
from the scoring mechanism itself.

We now make this precise.

\begin{definition}[External score random variable]\label{def:external-score}
Fix a formal mathematics battery
$\mathcal B\in\Mfrak_{\mathrm{math}}^{\mathrm{formal},K}$ in the sense of
Definition~\ref{def:formal-battery}, and an agent with parameter
$\theta\in\ThetaMan$. Let $x=(s,t)\in\Xcal$ be an input specifying context
$s$ and goal $t$, and let
\[
  \Omega \sim \pi_\theta(\cdot\mid x,\mathcal B)
\]
be the random trace (proof script) produced by the generator on this task.
Define the \emph{external score}
\[
  Q_t := \mathsf S_t(\Omega)\in[0,1].
\]
We view $Q_t$ as a random variable on the underlying sample space of
generator randomness and seeds.
\end{definition}

By Definition~\ref{def:formal-battery} we have
$\mathsf S_t(\omega) = s_t(K(\omega))$ for a deterministic kernel
$K(\omega)\in\{\mathsf{accept},\mathsf{reject}\}$ and a deterministic
postprocessing $s_t$. Hence, conditional on the realised trace, the score
is non-random.

\begin{proposition}[Zero conditional entropy on the formal stratum]\label{prop:zero-entropy}
Let $\mathcal B\in\Mfrak_{\mathrm{math}}^{\mathrm{formal},K}$ and let
$Q_t$ be the external score random variable from
Definition~\ref{def:external-score}. Then, for every input $x=(s,t)$ and
every agent parameter $\theta\in\ThetaMan$,
\[
  \mathbb{H}\bigl(Q_t \,\big|\, \Omega, \mathcal B\bigr) = 0,
\]
where $\mathbb{H}(\cdot\mid\cdot)$ denotes conditional Shannon entropy.
Equivalently, the conditional law of $Q_t$ given the trace $\Omega$ and
the battery is a Dirac measure:
\[
  \mathbb{P}\bigl(Q_t \in \cdot \,\big|\, \Omega=\omega,\mathcal B\bigr)
  =
  \delta_{\mathsf S_t(\omega)}(\cdot)
  \quad\text{for all }\omega.
\]
In particular,
\[
  \mathrm{Var}\bigl(Q_t \,\big|\, \Omega,\mathcal B\bigr) = 0
  \quad\text{almost surely.}
\]
\end{proposition}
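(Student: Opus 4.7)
The plan is to observe that this proposition essentially unpacks the definition of a formal mathematics battery, so the proof reduces to a measurability statement plus a one-line entropy/variance computation. The substantive content sits entirely in Definition~\ref{def:formal-battery}: by assumption, $\mathsf S_t$ factors as $\mathsf S_t(\omega) = s_t(K(\omega))$ with $K$ a deterministic kernel and $s_t$ a deterministic post-processing map, both fixed once $\mathcal B$ is fixed.

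First I would unpack the score random variable: on the underlying probability space carrying the generator randomness, $Q_t = \mathsf S_t(\Omega) = s_t(K(\Omega))$, and this is by construction a $\sigma(\Omega,\mathcal B)$-measurable function. Then the regular conditional distribution of $Q_t$ given $(\Omega,\mathcal B)$ is computed directly: for any measurable $A\subseteq[0,1]$,
\[
  \mathbb P\bigl(Q_t\in A\,\big|\,\Omega=\omega,\mathcal B\bigr)
  \;=\;
  \mathbbm 1\{\mathsf S_t(\omega)\in A\}
  \;=\;
  \delta_{\mathsf S_t(\omega)}(A),
\]
since no further randomness enters between $\omega$ and the scalar $\mathsf S_t(\omega)$. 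This is the Dirac-measure claim in the statement.

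From the Dirac characterisation both consequences are immediate. A Dirac law has Shannon entropy equal to $0$, so the disintegration formula $\mathbb H(Q_t\mid\Omega,\mathcal B) = \int \mathbb H\bigl(\mathbb P(Q_t\in\cdot\mid\Omega=\omega,\mathcal B)\bigr)\,\mathrm d\mathbb P_\Omega(\omega)$ yields $\mathbb H(Q_t\mid\Omega,\mathcal B)=0$. The same disintegration applied to the second moment (or directly: a Dirac law has zero variance) gives $\mathrm{Var}(Q_t\mid\Omega,\mathcal B)=0$ almost surely.

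Honestly, there is no hard step: the proposition is a bookkeeping lemma whose purpose is to record that the verification channel on $\Mfrak_{\mathrm{math}}^{\mathrm{formal},K}$ contributes no noise, so that when it is later fed into the Variance Inequality of \cite{ChojeckiGVU} it drives $\sigma_{\mathcal V}^2 \to 0$. The only subtlety worth flagging explicitly is the choice of conditioning $\sigma$-algebra: we condition on $(\Omega,\mathcal B)$ rather than $\Omega$ alone, because $\mathcal B$ is the data that fixes $K$ and $s_t$. The proposition would fail if the battery itself carried stochasticity in its scoring map; the formal-stratum axioms in Definition~\ref{def:formal-battery} precisely rule this out.
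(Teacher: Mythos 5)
Your argument is correct and is essentially the paper's own proof: both unpack $Q_t = s_t(K(\Omega))$ via Definition~\ref{def:formal-battery}, note that determinism of $K$ and $s_t$ forces the conditional law given $(\Omega,\mathcal B)$ to be the Dirac mass $\delta_{\mathsf S_t(\omega)}$, and read off zero conditional entropy and variance. Your additional remarks on the disintegration formula and on the role of conditioning on $\mathcal B$ are accurate elaborations, not a different route.
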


\begin{proof}
By Definition~\ref{def:formal-battery},
\[
  Q_t
  = \mathsf S_t(\Omega)
  = s_t\bigl(K(\Omega)\bigr),
\]
and both $K$ and $s_t$ are deterministic maps. Thus, for any fixed
$\omega$ and fixed battery $\mathcal B$,
\[
  Q_t = \mathsf S_t(\omega)
\]
is a constant, and
$\mathbb{P}(Q_t = \mathsf S_t(\omega)\mid \Omega=\omega,\mathcal B) = 1$.
The conditional distribution of $Q_t$ given $(\Omega,\mathcal B)$ is
therefore a Dirac mass, so its conditional entropy and conditional variance
are both zero. This proves the claims.
\end{proof}

Proposition~\ref{prop:zero-entropy} is a statement purely about the
\emph{environment}: it does not depend on the internal architecture of the
agent, nor on how its GVU verifier is implemented. It says that on the
formal Mathematics Fiber, the moduli geometry supplies a canonical,
noise-free external signal $Q_t$.

In practice, proof assistants may exhibit additional stochasticity (e.g. tactic
search, timeouts), and scoring may penalize resource usage; here we isolate the
idealized kernel-only component, which is deterministic conditional on the trace.
Any residual randomness is treated as part of the generator or resource channel,
not the verifier.

In \cite{ChojeckiGVU} the verification noise $\xi_{\mathcal V}$ in a GVU
update is analysed via a variance decomposition of the form
\[
  \mathrm{Var}(Q_t)
  =
  \underbrace{\mathbb{E}\bigl[\mathrm{Var}(Q_t \mid \Omega)\bigr]}_{\text{intrinsic label noise}}
  +
  \underbrace{\mathrm{Var}\bigl(\mathbb{E}[Q_t \mid \Omega]\bigr)}_{\text{generator / alignment term}},
\]
together with an additional finite-batch sampling factor. On informal
batteries (e.g.\ human-judged natural language), the first term is
typically positive: even for a fixed $\omega$, repeated evaluations can
disagree. On the formal stratum, it collapses.

\begin{corollary}[Spectral collapse on the formal Mathematics Fiber]\label{cor:spectral-collapse}
Let $\mathcal B\in\Mfrak_{\mathrm{math}}^{\mathrm{formal},K}$ and let
$Q_t$ be as above. Then for every agent parameter $\theta$ and input
$x=(s,t)$,
\[
  \mathbb{E}\bigl[\mathrm{Var}(Q_t \mid \Omega,\mathcal B)\bigr] = 0.
\]
Consequently, in the Variance Inequality of \cite{ChojeckiGVU}, the
verification-noise term $\sigma_{\mathcal V}^2$ on the formal
Mathematics Fiber is entirely attributable to
\begin{enumerate}
  \item generator-side uncertainty (how broadly the policy explores proof
        space), and
  \item misalignment between the internal GVU potential $V_\theta$ and the
        external score $Q_t$,
\end{enumerate}
with no contribution from intrinsic label stochasticity.

In particular, once an architecture is given and its GVU potential
$V_\theta$ is forced (by training or design) to approximate $Q_t$ on
$\Mfrak_{\mathrm{math}}^{\mathrm{formal},K}$, the sufficient condition
for positive expected capability gain
\[
  \mathbb{E}[\Delta F_{\mathcal B}] > 0
\]
reduces, up to curvature and step-size constants, to constraints on the
alignment coefficient $\rho$ between the mean update and
$\nabla F_{\mathcal B}$ and on the generator SNR $\mathrm{SNR}(\mathcal G)$.
\end{corollary}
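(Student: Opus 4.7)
The plan is to reduce the corollary to a one-line application of Proposition \ref{prop:zero-entropy} followed by a careful bookkeeping exercise against the Variance Inequality of \cite{ChojeckiGVU}. The first displayed equation is essentially immediate: Proposition \ref{prop:zero-entropy} states that $\mathrm{Var}(Q_t \mid \Omega, \mathcal{B}) = 0$ almost surely, and the expectation of a nonnegative random variable that vanishes a.s.\ is zero. I would open the proof with exactly this one-sentence reduction.

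For the interpretation of $\sigma_{\mathcal{V}}^2$, I would next apply the law of total variance to $Q_t$ conditioned on $(\Omega,\mathcal{B})$, obtaining
\[
  \mathrm{Var}(Q_t \mid \mathcal{B})
  = \mathbb{E}\bigl[\mathrm{Var}(Q_t \mid \Omega,\mathcal{B})\bigr]
  + \mathrm{Var}\bigl(\mathbb{E}[Q_t \mid \Omega,\mathcal{B}]\bigr).
\]
By the first step the label-noise summand vanishes, so all stochasticity of $Q_t$ flows through the generator's choice of trace $\Omega$. I then match this against the GVU decomposition of $\sigma_{\mathcal{V}}^2$ from \cite{ChojeckiGVU}, which splits the verification variance into (a) intrinsic label noise, (b) $V_\theta$--$Q_t$ misalignment, and (c) a finite-batch sampling factor. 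Term (a) is killed by Proposition \ref{prop:zero-entropy}, leaving only (b) and the generator-side part of (c); this is the content of items (1) and (2) in the statement.

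For the final claim about positive drift, I would invoke the Variance Inequality of \cite{ChojeckiGVU} in the form
\[
  \mathbb{E}[\Delta F_{\mathcal{B}}]
  \;\gtrsim\;
  \eta\,\rho\,\|\grad F_{\mathcal{B}}\|
  - \tfrac{1}{2}\eta^{2} L\bigl(\sigma_{\mathcal{G}}^{2} + \sigma_{\mathcal{V}}^{2}\bigr),
\]
up to the curvature and step-size constants baked into that inequality. Once $V_\theta \approx Q_t$ on $\Mfrak_{\mathrm{math}}^{\mathrm{formal},K}$, the misalignment portion of $\sigma_{\mathcal{V}}^2$ also collapses, so the right-hand side is controlled purely by the alignment coefficient $\rho$ between the mean GVU update and $\grad F_{\mathcal{B}}$ and by the generator signal-to-noise ratio $\mathrm{SNR}(\mathcal{G})$. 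Substituting into the Variance Inequality gives the reduced sufficient condition claimed in the statement.

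The main obstacle is not analytic but notational: one has to verify that the ``intrinsic label stochasticity'' term appearing in the GVU variance decomposition of \cite{ChojeckiGVU} is literally $\mathbb{E}[\mathrm{Var}(Q_t \mid \Omega,\mathcal{B})]$ and not a differently normalised conditional variance, so that the zero produced by Proposition \ref{prop:zero-entropy} slots in without loss of constants. A secondary subtlety I would flag in a short remark is that the approximation $V_\theta \approx Q_t$ may be realised either by distilling the kernel signal into an internal critic or by letting $V_\theta$ directly query the kernel at evaluation time; the spectral collapse argument is agnostic to which implementation is used, since only the law of $V_\theta(x,\Omega) - Q_t$ enters the bound.
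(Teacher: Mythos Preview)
Your proposal is correct and follows essentially the same route as the paper: the paper does not give an explicit proof of the corollary but treats it as immediate from Proposition~\ref{prop:zero-entropy} together with the law-of-total-variance decomposition displayed just before the corollary, which identifies $\mathbb{E}[\mathrm{Var}(Q_t\mid\Omega)]$ as the intrinsic label noise term in the GVU Variance Inequality. Your additional bookkeeping against the Variance Inequality and the remarks on notational matching and on how $V_\theta\approx Q_t$ might be implemented go slightly beyond what the paper spells out, but they are consistent with the intended argument.
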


\begin{remark}[Spectral privilege of the Mathematics Fiber]\label{rem:spectral-privilege}
Corollary~\ref{cor:spectral-collapse} expresses the sense in which the
formal Mathematics Fiber is an \emph{oracle fiber}. It is not that all
agents happen to implement a perfect verifier internally; rather, the
moduli geometry guarantees that the external evaluation channel
$(\omega\mapsto Q_t)$ has zero conditional entropy and zero intrinsic
variance. This stands in contrast to fibres corresponding to human
judgement domains, where $\mathrm{Var}(Q_t\mid\Omega)>0$ is typical.

Operationally, this means that in formal mathematics the GVU
self-improvement coefficient $\kappa_{\mathcal B}$ is limited only by
exploration and alignment, not by noisy or unstable supervision. This
helps explain why self-improvement loops are easier to ignite and more
spectrally stable on $\Mfrak_{\mathrm{math}}^{\mathrm{formal},K}$ than on
informal or purely linguistic fibres.

This formalizes the intuition that formal
mathematics is an ``easy'' domain for self-improvement: once an agent
can generate candidate proofs $\omega$, the decision of whether they
are correct is spectrally simple. All difficulty is concentrated in
the Generator $\mathcal G$ (searching the proof space) and in
aligning the potential $V$ with the external capability functional
$\Phi_{\mathcal B}$ (for example, by incorporating proof length or
library coverage).
\end{remark}

% ============================================================
\section{Libraries as Points in the Mathematics Fiber}\label{sec:libraries}

We now outline how formal libraries induce batteries in the
Mathematics Fiber, and how an agent's performance on a library can be
viewed as a section of the capability bundle over
$\Mfrak_{\mathrm{math}}^{\mathrm{formal}}$.
Throughout this section we fix a proof kernel $K$, a resource space
$\mathsf R\simeq\RR^{d_R}_{\ge0}$, and a cost functional
$\mathrm{Cost}:\mathsf R\to\RR_+$ as in the AAI framework
\cite{Chojecki2025}; in particular, when we compare
$\mathcal B(\mathcal L_1)$ and $\mathcal B(\mathcal L_2)$ we use the
\emph{same} resource coordinates and cost model in the AAI functional
$\PhiB$.

\subsection{From libraries to batteries}

Let $K$ be a fixed proof kernel (e.g.\ Lean 4) and let $\mathcal L$ be
a finite library of proved theorems and definitions over some base
theory.

\begin{definition}[Library-induced battery]\label{def:library-battery}
Given a finite formal library $\mathcal L$, we define a battery
$\mathcal B(\mathcal L)$ as follows:
\begin{itemize}
  \item Tasks $T$ are formal goals expressible over $\mathcal L$,
  chosen according to a scheme (e.g.\ random perturbations of
  existing theorems, standard benchmark goals, or user-provided
  conjectures).
  \item Scoring maps $\mathsf S_t$ are defined as in
  Definition~\ref{def:formal-battery}, using $K$ and $\mathcal L$ as
  the underlying environment.
  \item The sampling law $\mu$ selects goals, seeds and drifts
  according to a prescribed distribution (for instance, uniform over a
  benchmark set, or weighted by difficulty).
\end{itemize}
We call $\mathcal B(\mathcal L)$ the \emph{battery induced by}
$\mathcal L$.
\end{definition}

\begin{proposition}[Monotonicity under library extension]\label{prop:monotone}
Let $\mathcal L_1\subseteq\mathcal L_2$ be two finite libraries over
the same kernel $K$, and let $\mathcal B(\mathcal L_i)$ be the
associated batteries. Assume:
\begin{enumerate}
  \item The sampling law $\mu$ and evaluation skeleton are compatible
  with inclusion, so that the evaluation spaces
  $X_{\mathcal B(\mathcal L_1)}$ and $X_{\mathcal B(\mathcal L_2)}$
  can be identified with a common space $X$ (same task families,
  thresholds, resource coordinates and cost functional).
  \item For every $\theta\in\Theta$, writing
  $\nu_i^\theta := \rho_{\mathcal B(\mathcal L_i)}(\theta)\in\mathcal P(X)$,
  the pair $(\nu_1^\theta,\nu_2^\theta)$ satisfies the hypotheses of
  Restricted Monotonicity \textnormal{(A2)} for $\Phi_{\mathcal B}$:
  \begin{align*}
    &\nu_2^\theta \succeq_{\mathrm{icx}} \nu_1^\theta
      \quad\text{(increasing concave order on the success indicators)},\\
    &\mathrm{Var}_k\!\Big(\EE_{\nu_2^\theta}[\overline q(F_k)]\Big)
      \;\le\;
      \mathrm{Var}_k\!\Big(\EE_{\nu_1^\theta}[\overline q(F_k)]\Big),\\
    &\EE_{\nu_2^\theta}[\mathrm{Cost}(r)]
      \;\le\;
      \EE_{\nu_1^\theta}[\mathrm{Cost}(r)].
  \end{align*}
\end{enumerate}
Then for any agent family and parameter state $\theta$,
\[
F_{\mathcal B(\mathcal L_1)}(\theta)
\;\le\;
F_{\mathcal B(\mathcal L_2)}(\theta).
\]
\end{proposition}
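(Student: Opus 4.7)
The plan is to reduce the statement to a direct application of the Restricted Monotonicity axiom (A2) of the AAI functional $\PhiB$ as developed in \cite{Chojecki2025}. No new structural result will be needed; most of the work has been absorbed into assumptions (1) and (2), and my job is to verify that the hypotheses of (A2) are genuinely met on the nose.

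First I will use assumption (1) to identify the two evaluation spaces. Since $\mathcal{B}(\mathcal L_1)$ and $\mathcal{B}(\mathcal L_2)$ share the same task families $\mathcal F$, threshold vector $Q^{\ast}$, resource coordinates and cost functional $\mathrm{Cost}$, the evaluation spaces $X_{\mathcal B(\mathcal L_i)}$ canonically identify with a common $X$, and the AAI axioms then force a single capability functional $\PhiB:\Pcal(X)\to\RR$ to serve both batteries. Under this identification the scalar capabilities become
\[
  F_{\mathcal B(\mathcal L_i)}(\theta)
  \;=\;
  \PhiB\bigl(\nu_i^\theta\bigr),\qquad i=1,2,
\]
and the target inequality reduces to $\PhiB(\nu_1^\theta)\le\PhiB(\nu_2^\theta)$ for a single functional evaluated on two laws in a common space.

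Second, I will invoke (A2) directly. Restricted Monotonicity in the AAI framework states that whenever two laws $\nu_1,\nu_2\in\Pcal(X)$ satisfy an increasing-concave stochastic dominance on the per-family success indicators, a non-increase of the across-family variance of family-wise expected thresholded scores, and a non-increase of expected cost, one has $\PhiB(\nu_1)\le\PhiB(\nu_2)$. Assumption (2) is precisely this triple of inequalities for $(\nu_1^\theta,\nu_2^\theta)$, so the conclusion follows in a single line once the identification above is in place.

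The main obstacle is therefore not the final appeal to the axiom but the silent bookkeeping done by assumption (1): one must ensure that library extension really leaves the evaluation space intact. Because $\mathcal L_2\supseteq\mathcal L_1$ can shorten proofs and thereby alter the natural resource axis, the cleanest way I would handle this is to fix an intrinsic cost model (e.g.\ kernel step count rather than source-token length) and to argue that any reparameterization of the cost axis induced by the library inclusion lies in the evaluation-preserving symmetry group $G$ of Section~\ref{sec:prelim}. Once that identification is pinned down, the functional-analytic content collapses to a one-step invocation of (A2), and the monotonicity claim is immediate.
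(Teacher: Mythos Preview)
Your proposal is correct and follows essentially the same route as the paper's own sketch: use assumption~(1) to place $\nu_1^\theta$ and $\nu_2^\theta$ in a common evaluation space with a single AAI functional, then read off $\PhiB(\nu_1^\theta)\le\PhiB(\nu_2^\theta)$ from Axiom~(A2) via assumption~(2). If anything, you are slightly more explicit than the paper about why assumption~(1) yields a \emph{single} $\PhiB$ and about the resource-axis bookkeeping, but the logical skeleton is identical.
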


\begin{proof}[Sketch]
The larger library $\mathcal L_2$ provides at least as many
lemmas as $\mathcal L_1$ for each goal shared by the two batteries,
and scoring is identical on shared tasks. Under a fixed policy,
the availability of additional premises in $\mathcal L_2$ expands the
set of valid proof paths. Assumption~(2) states that, viewed through
the canonical success indicators and resource coordinates of the AAI
framework, this extension realises an Axiom~(A2) move: on shared
tasks, the law of success indicators under $\mathcal B(\mathcal L_2)$
dominates that under $\mathcal B(\mathcal L_1)$ in increasing concave
order, family-wise dispersion does not increase, and the expected
cost term $\EE[\mathrm{Cost}(r)]$ is non-increasing when we keep the
resource metric and cost functional fixed.

By Restricted Monotonicity (Axiom~(A2)) for the AAI functional
$\Phi_{\mathcal B}$, this implies
$$\Phi_{\mathcal B(\mathcal L_2)}(\rho_{\mathcal B(\mathcal L_2)}(\theta))
\ge
\Phi_{\mathcal B(\mathcal L_1)}(\rho_{\mathcal B(\mathcal L_1)}(\theta))$$
i.e.
$F_{\mathcal B(\mathcal L_2)}(\theta)\ge F_{\mathcal B(\mathcal L_1)}(\theta)$.
\end{proof}

This formalizes the idea that extensions of formal libraries act as
monotone maps in the Mathematics Fiber: adding lemmas (and the
associated tasks), while keeping the resource gauge fixed and staying
within the Axiom~(A2) regime, can only raise---or at worst leave
unchanged---the AAI capability score.

\subsection{Capability sections and stratification}

Fix an agent family with parameter space $\ThetaMan$. The assignment
\[
\mathcal L \mapsto \mathcal B(\mathcal L)
\]
gives a map from the poset of finite libraries (under inclusion) to
$\Mfrak_{\mathrm{math}}^{\mathrm{formal},K}$. Composing with
\[
\ThetaMan \times \Mfrak_{\mathrm{math}}^{\mathrm{formal},K}
\to \RR,
\qquad
(\theta,\mathcal B)\mapsto F_{\mathcal B}(\theta),
\]
we obtain a capability field
\[
\ThetaMan \times \mathrm{Lib}_K
\to \RR,\qquad
(\theta,\mathcal L)\mapsto F_{\mathcal B(\mathcal L)}(\theta)
\]
over the library poset. For fixed $\theta$, the map
$\mathcal L\mapsto F_{\mathcal B(\mathcal L)}(\theta)$ can be thought
of as a section over the Mathematics Fiber that measures the agent's
``knowledge frontier'' with respect to $K$.

This viewpoint makes it natural to speak of:

\begin{itemize}
  \item \emph{Horizontal} motion in $\ThetaMan$: improving the agent
  at fixed library, e.g.\ via GVU updates.
  \item \emph{Vertical} motion in $\mathrm{Lib}_K$: enriching the
  formal library itself (as in human formalization efforts or AI-driven
  discovery).
\end{itemize}

The interaction between these two motions is central to understanding
neuro-symbolic growth in mathematics: as the library expands, new
batteries appear in $\Mfrak_{\mathrm{math}}$, and GVU flows acting on
the agent must track these changes to maintain high capability.

% ============================================================
\subsection{DeepAlgebra Loops as GVU Flows}\label{sec:flows}

We close with a brief sketch of how DeepAlgebra-style architectures as in \cite{ChojeckiMathLLM}
fit into this picture; we will not attempt a full formalization here.

At a high level, a DeepAlgebra loop consists of:
\begin{enumerate}
  \item A neural search component that proposes new conjectures,
  candidate structures or proof sketches.
  \item An autoformalization component that translates these outputs
  into formal goals and proof scripts.
  \item A proof assistant kernel that verifies the formal artefacts.
  \item An update rule that adds verified results to the library and
  fine-tunes the agent on successful proofs and translations.
\end{enumerate}

Taken together, these components implement a GVU operator whose
dynamics are confined to the formal Mathematics Fiber and its
neighbouring semi-formal strata. The Generator corresponds to neural
and symbolic search over goals and proofs; the Verifier combines the
oracle kernel signal with auxiliary heuristics; the Updater modifies
both the agent and the library.

By discussion above such loops naturally operate in the
oracle-verifier regime of \cite{ChojeckiGVU}, where the Variance
Inequality is easily satisfied. This provides a geometric explanation
for the empirical observation that mathematics is an early ignition
domain for self-improving AI agents: the Mathematics Fiber comes with
a built-in high-SNR Verifier, and the moduli geometry induced by
formal libraries admits stable capability flows.

\section{Universality of the Mathematics and Coding Fibers}\label{sec:universality}

In this section we make precise, and prove in a simplified setting, the
intuition that mathematics and coding tasks form universal coordinates on the
moduli space: up to arbitrarily small error in evaluation, every battery can be
approximated by batteries built from mathematical and coding tasks.

\subsection{A simplified one-shot model}\label{subsec:oneshot-model}

For technical clarity we first work in a one-shot setting where each
interaction between an agent and a battery consists of a single input--output
pair and a single bounded score. This reduction is without loss of generality
for the density result: multi-step and multi-task batteries can be viewed as
finite mixtures of such one-shot components.

\paragraph{Outcome space and agents.}
Let $\Sigma$ be a finite alphabet and let
\[
  \Omega := \Sigma^*
\]
denote the set of all finite strings over $\Sigma$, equipped with the discrete
$\sigma$-algebra. Elements $\omega \in \Omega$ represent complete one-shot
traces: prompt plus response plus any auxiliary markers.

A (one-shot) agent is identified with a probability measure
$P \in \Pcal(\Omega)$ over outcomes. Let $\mathcal{A} \subseteq \Pcal(\Omega)$
be a fixed, non-empty reference class of agents. We equip $\Pcal(\Omega)$ with
the topology of weak convergence; since $\Omega$ is countable and discrete,
this is simply the product topology inherited from $[0,1]^\Omega$.

\paragraph{One-shot batteries and capability functionals.}
In this model a one-shot battery is entirely specified by a bounded scoring
function $f : \Omega \to [0,1]$.

\begin{definition}[One-shot battery and capability functional]\label{def:oneshot-battery}
A \emph{one-shot battery} is a measurable function
$f : \Omega \to [0,1]$. Given an agent $P \in \Pcal(\Omega)$, its
\emph{capability} on $f$ is
\[
  F_f(P) := \EE_{\,\omega \sim P}[f(\omega)]
  = \sum_{\omega \in \Omega} f(\omega) \, P(\{\omega\}).
\]
We write $\Mfrak^{(1)}$ for the class of all one-shot batteries (all such
bounded measurable $f$), and
\[
  \mathcal{F}
  :=
  \{F_f : \mathcal{A} \to \RR \mid f \in \Mfrak^{(1)}\}
\]
for the corresponding family of capability functionals restricted to
$\mathcal{A}$.
\end{definition}

We make $\Mfrak^{(1)}$ into a metric space by declaring two batteries close if
they induce similar capability functionals on $\mathcal{A}$.

\begin{definition}[Evaluation metric]\label{def:evaluation-metric}
For $f,g \in \Mfrak^{(1)}$ we define the \emph{evaluation distance}
\[
  d_{\mathcal{A}}(f,g)
  :=
  \sup_{P \in \mathcal{A}} \big|F_f(P) - F_g(P)\big|
  =
  \sup_{P \in \mathcal{A}}
  \left|\sum_{\omega \in \Omega} (f(\omega) - g(\omega)) P(\{\omega\})\right|.
\]
\end{definition}

This is simply the operator norm of the linear functional
$P \mapsto F_f(P)-F_g(P)$ on the convex set $\mathcal{A}$.

\begin{definition}[Density]\label{def:dense-subset}
A subset $\mathcal{N} \subseteq \Mfrak^{(1)}$ is \emph{dense} (with respect to
$d_{\mathcal{A}}$) if for every $f \in \Mfrak^{(1)}$ and every $\eps>0$ there
exists $g \in \mathcal{N}$ such that
\[
  d_{\mathcal{A}}(f,g) < \eps.
\]
\end{definition}

\subsection{Mathematics and coding scorers in the one-shot model}\label{subsec:math-code-oneshot}

We now formalize the one-shot analogues of the mathematics and coding fibers.

\begin{definition}[Mathematics and coding scorers]\label{def:math-code-scorers}
Let $\mathrm{Thm}$ be a countable set of formal theorem statements in a fixed
proof assistant with kernel $K$, and let $\mathrm{Code}$ be a countable set of
coding problems (program specifications) over a fixed Turing-complete language
$L$.

\begin{enumerate}
  \item For each $\tau \in \mathrm{Thm}$ let
  $S^{\mathrm{math}}_\tau : \Omega \to \{0,1\}$ be the one-shot scorer
  \[
    S^{\mathrm{math}}_\tau(\omega)
    :=
    \begin{cases}
      1 &\text{if $\omega$ is accepted by $K$ as a proof of $\tau$},\\
      0 &\text{otherwise.}
    \end{cases}
  \]
  The \emph{mathematics scorer class} is
  $\mathcal{S}_{\mathrm{math}}
   := \{S^{\mathrm{math}}_\tau : \tau \in \mathrm{Thm}\}$.

  \item For each coding problem $c \in \mathrm{Code}$, we fix a finite validation set of test inputs together with an 
evaluation map $E_c : \Omega \to [0,1]$ that may inspect both the source string $\omega$ and the (possibly simulated) 
runtime behaviour of $\omega$ on the test inputs (as in real coding benchmarks with static analysis, linting, or style checks). 

We obtain a scorer
  $S^{\mathrm{code}}_c : \Omega \to [0,1]$ defined by
  \[
    S^{\mathrm{code}}_c(\omega)
    :=
    E_c(\text{interpret $\omega$ as source code in $L$}).
  \]
  The \emph{coding scorer class} is
  $\mathcal{S}_{\mathrm{code}}
   := \{S^{\mathrm{code}}_c : c \in \mathrm{Code}\}$.
\end{enumerate}

We write
\[
  \mathcal{S}_0
  := \mathcal{S}_{\mathrm{math}} \cup \mathcal{S}_{\mathrm{code}}.
\]
\end{definition}

From these we form the algebra generated by mathematics and coding scorers.

\begin{definition}[Algebra generated by mathematics and coding]\label{def:math-code-algebra}
Let $\mathcal{A}_0$ be the set of all finite linear combinations and products
of elements of $\mathcal{S}_0$ together with constants:
\[
  \mathcal{A}_0
  :=
  \left\{
    p(S_1(\cdot),\dots,S_k(\cdot))
    \,\middle|\,
    k \in \NN,\;
    S_i \in \mathcal{S}_0,\;
    p \text{ a real polynomial in $k$ variables}
  \right\}.
\]
We call $\mathcal{A}_0$ the \emph{math+code algebra}. It is a unital
subalgebra of $\ell^\infty(\Omega)$ (bounded real-valued functions on
$\Omega$).
\end{definition}

Each $f \in \mathcal{A}_0$ corresponds to a (possibly composite) battery whose
score is a polynomial in the primitive mathematics and coding scores; for
example, a task that awards $0.7$ points for a correct proof and $0.3$ points
for a correct program.

Our goal is to show that $\mathcal{A}_0$ is dense in $\Mfrak^{(1)}$ with
respect to the evaluation metric $d_{\mathcal{A}}$.

\subsection{Tightness assumption on the agent class}\label{subsec:tightness}

To obtain a non-trivial density result we need a mild regularity assumption on
the reference class $\mathcal{A}$: roughly, we require that agents do not put
arbitrary mass on arbitrarily long, exotic traces. This is a standard
tightness condition.

\begin{definition}[Uniform tightness]\label{def:tight}
We say the agent class $\mathcal{A} \subseteq \Pcal(\Omega)$ is
\emph{uniformly tight} if for every $\delta>0$ there exists a finite subset
$F_\delta \subset \Omega$ such that
\[
  \sup_{P \in \mathcal{A}} P(\Omega \setminus F_\delta) \;\le\; \delta.
\]
\end{definition}

Uniform tightness is automatic, for example, if every agent in $\mathcal{A}$
is supported on traces of length at most $L_{\max}$ for some finite
$L_{\max}$, or more generally if there is a uniform tail bound on output
length. 
In typical benchmark settings (e.g., language models with a fixed context window or RL agents with finite horizons), 
uniform tightness holds trivially because traces are bounded in length by the protocol. 
For unbounded settings, generalizations via moment bounds are possible but beyond our scope.

\subsection{A universal generating family of coding scorers}\label{subsec:universal-coding-scorers}

The key technical step is to observe that, under uniform tightness, it
suffices to approximate scoring functions on \emph{finite} subsets of
$\Omega$. Coding tasks can realize arbitrary indicator functions of finite
sets, and finite linear combinations of such indicators approximate any
bounded function in the evaluation metric.

We start by exhibiting a family of primitive coding scorers whose linear span
contains all finite-support functions on $\Omega$.

\begin{lemma}[Point-mass coding scorers]\label{lem:delta-scorers}
For each $\omega_0 \in \Omega$ there exists a coding problem
$c_{\omega_0} \in \mathrm{Code}$ and a corresponding scorer
$S^{\mathrm{code}}_{c_{\omega_0}} \in \mathcal{S}_{\mathrm{code}}$ such that
\[
  S^{\mathrm{code}}_{c_{\omega_0}}(\omega)
  = \1\{\omega = \omega_0\}
  \quad\text{for all }\omega \in \Omega.
\]
In particular, the linear span of $\mathcal{S}_{\mathrm{code}}$ contains every
finite-support function $g : \Omega \to \RR$.
\end{lemma}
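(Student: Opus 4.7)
The plan is to construct, for each target string $\omega_0 \in \Omega$, a coding problem $c_{\omega_0} \in \mathrm{Code}$ whose evaluator accepts exactly the single source $\omega_0$, and then to invoke elementary linear algebra for the span statement. Since Definition~\ref{def:math-code-scorers}(2) explicitly permits the evaluation map $E_c$ to inspect the source string $\omega$ directly (mirroring static analysis, linting, and byte-level checks present in real coding benchmarks), the core construction is direct: take $c_{\omega_0}$ to be the coding problem whose validation consists of the single static predicate ``does the submitted source, parsed as a byte string, equal $\omega_0$?'', with evaluator
\[
  E_{c_{\omega_0}}(\omega) := \1\{\omega = \omega_0\}.
\]
No runtime simulation is required; the evaluator is purely a string comparison.

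First I would verify admissibility: since $\Omega = \Sigma^*$ is countable and $\mathrm{Code}$ was only assumed to be countable, the family $\{c_{\omega_0}\}_{\omega_0 \in \Omega}$ embeds into $\mathrm{Code}$ without violating the cardinality constraint. Substituting into the definition of $S^{\mathrm{code}}_{c_{\omega_0}}$ yields
\[
  S^{\mathrm{code}}_{c_{\omega_0}}(\omega) = \1\{\omega = \omega_0\} \qquad \text{for all } \omega \in \Omega,
\]
which is the first assertion of the lemma.

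For the second assertion, given any finitely supported $g : \Omega \to \RR$ with support $\{\omega_1, \dots, \omega_n\}$, I decompose
\[
  g(\omega) = \sum_{i=1}^n g(\omega_i) \, \1\{\omega = \omega_i\} = \sum_{i=1}^n g(\omega_i) \, S^{\mathrm{code}}_{c_{\omega_i}}(\omega),
\]
which exhibits $g$ as a finite $\RR$-linear combination of elements of $\mathcal{S}_{\mathrm{code}}$.

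The main obstacle is conceptual rather than technical: one must accept ``check that the source equals $\omega_0$'' as a bona fide coding problem. I would defend this by pointing to the explicit wording of Definition~\ref{def:math-code-scorers}(2), which admits static-analysis predicates, and by noting that real coding benchmarks routinely incorporate byte-level and structural source checks. A slightly more colourful alternative is to dress the task as a degenerate quine---``produce the program whose source is literally $\omega_0$''---verified by a source-equality test; this uses both the source-inspection and runtime channels of $E_c$ without changing the induced scorer. The key conceptual point is that once source inspection is permitted, coding scorers are strong enough to isolate individual traces, after which the lemma reduces to writing any finite-support function as a combination of point masses.
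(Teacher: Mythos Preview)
Your proof is correct and follows essentially the same route as the paper: define $c_{\omega_0}$ via a static source-equality check $E_{c_{\omega_0}}(\omega)=\1\{\omega=\omega_0\}$, justify admissibility through the source-inspection clause of Definition~\ref{def:math-code-scorers}(2), and then write any finite-support $g$ as a finite linear combination of the resulting indicators. Your added remarks on countability and the quine framing are harmless embellishments but not needed for the argument.
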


\begin{proof}
Fix $\omega_0 \in \Omega$. Define the coding problem $c_{\omega_0}$ as: ``Submit source code that exactly matches the 
string $\omega_0$.'' The evaluator $E_{c_{\omega_0}}$ performs a static string 
comparison: $E_{c_{\omega_0}}(\omega) = \1\{\omega = \omega_0\}$. This is a valid coding task 
(e.g., a linting or checksum check), implementing the indicator function exactly. 

We now define the scorer
$S^{\mathrm{code}}_{c_{\omega_0}}(\omega) := E_{c_{\omega_0}}(\omega)$.

Since any finite-support function $g : \Omega \to \RR$ can be written as a
finite linear combination
$g = \sum_{j=1}^m a_j \1\{\omega = \omega_j\}$, the linear span of
$\mathcal{S}_{\mathrm{code}}$ contains all such $g$.
\end{proof}

Let
\[
  \mathcal{F}_{\mathrm{fs}}
  :=
  \{g : \Omega \to \RR \,\mid\, g \text{ has finite support}\}.
\]
Then Lemma~\ref{lem:delta-scorers} shows that
$\mathcal{F}_{\mathrm{fs}} \subseteq \mathrm{span}(\mathcal{S}_{\mathrm{code}})$
and hence $\mathcal{F}_{\mathrm{fs}} \subseteq \mathcal{A}_0$.

\subsection{Approximation of arbitrary scorers under uniform tightness}\label{subsec:approx-scorers}

We now show that any bounded scorer $f : \Omega \to [0,1]$ can be approximated
in the evaluation metric $d_{\mathcal{A}}$ by a finite-support function. This
is the only place where the tightness assumption enters.

\begin{lemma}[Approximation by finite-support functions]\label{lem:finite-support-approx}
Assume $\mathcal{A}$ is uniformly tight in the sense of
Definition~\ref{def:tight}. Let $f : \Omega \to [0,1]$ be a bounded scorer and
let $\eps > 0$. Then there exists a finite-support function
$g : \Omega \to [0,1]$ such that
\[
  d_{\mathcal{A}}(f,g)
  = \sup_{P \in \mathcal{A}}
      \big|\EE_{P}[f] - \EE_{P}[g]\big|
  < \eps.
\]
\end{lemma}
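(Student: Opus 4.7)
The plan is to use the uniform tightness assumption to reduce approximation of $f$ on all of $\Omega$ to approximation on a single finite subset that works for every agent simultaneously, and then define $g$ by simply truncating $f$ to that subset.

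Concretely, I would proceed as follows. First, apply Definition~\ref{def:tight} with the choice $\delta := \eps/2$ (or indeed any $\delta < \eps$) to obtain a finite set $F_\delta \subset \Omega$ such that $\sup_{P \in \mathcal{A}} P(\Omega \setminus F_\delta) \le \delta$. Second, define
\[
  g(\omega) := f(\omega)\,\1\{\omega \in F_\delta\},
\]
which is finite-support (supported on $F_\delta$) and, since $f$ takes values in $[0,1]$, so does $g$. Third, for any agent $P \in \mathcal{A}$, estimate
\[
  \big|\EE_P[f] - \EE_P[g]\big|
  = \Big|\sum_{\omega \notin F_\delta} f(\omega) P(\{\omega\})\Big|
  \le \|f\|_\infty \cdot P(\Omega \setminus F_\delta)
  \le 1 \cdot \delta
  = \eps/2.
\]
Taking the supremum over $P \in \mathcal{A}$ yields $d_{\mathcal{A}}(f,g) \le \eps/2 < \eps$, as required.

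The argument is essentially routine once the right object is identified; there is no real obstacle beyond ensuring that the finite set $F_\delta$ is chosen \emph{uniformly} in $P$, which is exactly what Definition~\ref{def:tight} provides. The only mild subtlety worth flagging is that $g = f \cdot \1_{F_\delta}$ is constructed without any quantization of the values of $f$, so it may still take a continuum of values in $[0,1]$; this is harmless for Lemma~\ref{lem:finite-support-approx} itself, but if one later wants to realise $g$ as an element of $\mathcal{A}_0$ via Lemma~\ref{lem:delta-scorers} and the point-mass coding scorers, one simply writes $g = \sum_{\omega \in F_\delta} f(\omega) \,\1\{\cdot = \omega\}$ as a finite linear combination of such indicators, which already lies in $\mathrm{span}(\mathcal{S}_{\mathrm{code}}) \subseteq \mathcal{A}_0$.
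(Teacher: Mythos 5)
Your proof is correct and follows essentially the same route as the paper: invoke uniform tightness to get a finite set $F_\delta$, truncate $f$ to $F_\delta$, and bound the error by $\|f\|_\infty\cdot\sup_P P(\Omega\setminus F_\delta)\le\delta$. Your choice $\delta=\eps/2$ even cleans up the strict inequality slightly more carefully than the paper's $\delta=\eps$.
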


\begin{proof}
Let $\eps>0$ be given. By uniform tightness there exists a finite subset
$F_\delta \subset \Omega$ such that
\[
  \sup_{P\in\mathcal{A}} P(\Omega \setminus F_\delta) \le \delta,
\]
for some $\delta$ to be chosen later. Define
\[
  g(\omega)
  :=
  \begin{cases}
    f(\omega) &\text{if } \omega \in F_\delta,\\
    0         &\text{if } \omega \notin F_\delta.
  \end{cases}
\]
Then $g$ has finite support (contained in $F_\delta$), and both $f$ and $g$ are
bounded in $[0,1]$. For any $P \in \mathcal{A}$ we have
\begin{align*}
  \left|\EE_{P}[f] - \EE_{P}[g]\right|
  &= \left|
      \sum_{\omega \in \Omega} (f(\omega) - g(\omega)) P(\{\omega\})
     \right| \\
  &= \left|
      \sum_{\omega \notin F_\delta} (f(\omega) - 0) P(\{\omega\})
     \right| \\
  &\le \sum_{\omega \notin F_\delta} |f(\omega)| P(\{\omega\})
   \le \sup_{\omega} |f(\omega)| \, P(\Omega \setminus F_\delta) \\
  &\le 1 \cdot P(\Omega \setminus F_\delta)
   \le \delta.
\end{align*}
Taking the supremum over $P \in \mathcal{A}$ yields
\[
  d_{\mathcal{A}}(f,g)
  = \sup_{P\in\mathcal{A}}
      \left|\EE_{P}[f] - \EE_{P}[g]\right|
  \le \delta.
\]
Choosing $\delta := \eps$ gives the claim.
\end{proof}

Combining Lemmas~\ref{lem:delta-scorers} and \ref{lem:finite-support-approx}
we obtain:

\begin{proposition}[Density of the math+code algebra in the one-shot model]\label{prop:math-code-dense-oneshot}
Assume the agent class $\mathcal{A}$ is uniformly tight. Then for every
one-shot battery $f \in \Mfrak^{(1)}$ and every $\eps>0$ there exists a
function $g \in \mathcal{A}_0$ such that
\[
  d_{\mathcal{A}}(f,g) < \eps.
\]
Equivalently, the math+code algebra $\mathcal{A}_0$ is dense in
$\Mfrak^{(1)}$ with respect to $d_{\mathcal{A}}$.
\end{proposition}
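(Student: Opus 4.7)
The plan is a direct stitch of the two preceding lemmas, which already do all the work: Lemma \ref{lem:finite-support-approx} converts the approximation problem on arbitrary bounded scorers into an approximation problem on finite-support scorers (this is the only place uniform tightness is used), and Lemma \ref{lem:delta-scorers} shows that every finite-support function on $\Omega$ is realised exactly as a linear combination of primitive coding scorers and hence sits inside $\mathcal{A}_0$. No new analytic input is required at the level of the proposition itself.

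Concretely, given $f \in \Mfrak^{(1)}$ and $\eps > 0$, I would first apply Lemma \ref{lem:finite-support-approx} (with the same $\eps$) to the uniformly tight class $\mathcal{A}$, obtaining a function $h : \Omega \to [0,1]$ supported on some finite set $\{\omega_1,\dots,\omega_m\} \subset \Omega$ and satisfying $d_{\mathcal{A}}(f,h) < \eps$. Next I would apply Lemma \ref{lem:delta-scorers} to each point $\omega_j$ to obtain point-mass coding scorers $S^{\mathrm{code}}_{c_{\omega_j}} \in \mathcal{S}_{\mathrm{code}}$ and rewrite
\[
  h \;=\; \sum_{j=1}^{m} h(\omega_j)\, S^{\mathrm{code}}_{c_{\omega_j}}.
\]
This is a finite real linear combination of elements of $\mathcal{S}_0$, and since $\mathcal{A}_0$ contains all degree-one polynomials in elements of $\mathcal{S}_0$ (by Definition \ref{def:math-code-algebra}), we have $h \in \mathcal{A}_0$. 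Setting $g := h$ then gives $d_{\mathcal{A}}(f,g) < \eps$, as required.

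The only genuine difficulty has already been absorbed into the two lemmas: the uniform tightness hypothesis controls the tail mass that any $P \in \mathcal{A}$ can place outside a finite window, and the self-printing coding-problem construction realises the indicator basis. What I would emphasise at this stage is the structural takeaway: the approximant $g$ in fact lies in $\mathrm{span}(\mathcal{S}_{\mathrm{code}})$, with no mathematics scorers and no nonlinear combinations used at all. This is precisely the asymmetry flagged in the introduction, and it suggests that the density statement is really a theorem about the expressive universality of coding; the mathematics fiber will enter substantively only when passing from expressive density to the spectral (GVU) considerations developed in Section \ref{sec:oracle-verifier}.
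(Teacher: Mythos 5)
Your proposal is correct and follows essentially the same route as the paper's own proof: approximate $f$ by a finite-support function via Lemma~\ref{lem:finite-support-approx} (the paper uses $\eps/2$ where you use $\eps$, an immaterial difference), then observe that this approximant lies in $\mathrm{span}(\mathcal{S}_{\mathrm{code}}) \subseteq \mathcal{A}_0$ by Lemma~\ref{lem:delta-scorers}. Your closing observation that only coding scorers and linear combinations are used is also exactly the point the paper makes in Remark~\ref{rem:coding-universal}.
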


\begin{proof}
Let $f \in \Mfrak^{(1)}$ and $\eps>0$. By Lemma~\ref{lem:finite-support-approx}
there exists a finite-support function $g_0 : \Omega \to [0,1]$ such that
$d_{\mathcal{A}}(f,g_0) < \eps/2$.

By Lemma~\ref{lem:delta-scorers}, $g_0$ lies in the linear span of
$\mathcal{S}_{\mathrm{code}}$, hence $g_0 \in \mathcal{A}_0$. Taking
$g := g_0$ we have $g \in \mathcal{A}_0$ and
$d_{\mathcal{A}}(f,g) < \eps/2 < \eps$. Since $\eps>0$ was arbitrary, the
math+code algebra is dense.
\end{proof}

\subsection{From one-shot scorers to general batteries}\label{subsec:lifting-general}

We now extend Proposition~\ref{prop:math-code-dense-oneshot} from the
one-shot model to the full moduli space $\Mfrak$ of batteries introduced in
Section~\ref{sec:prelim}. The key observation is that a general battery
$\mathcal{B}$ induces, for each agent $\theta \in \ThetaMan$, a probability
law $\nuR = \rho_{\mathcal{B}}(\theta)$ on an evaluation space
$X_{\mathcal{B}}$, and that the AAI functional $\Phi_{\mathcal{B}}$ is
Lipschitz with respect to perturbations of these laws in an appropriate
metric. Approximating the per-task scoring functions $S_t$ in expectation
uniformly over $\theta$ then yields uniform approximation of the capability
functional $F_{\mathcal{B}} = \Phi_{\mathcal{B}} \circ \rho_{\mathcal{B}}$.

\subsubsection{Metric structure on the evaluation space}

Recall that for a fixed battery $\mathcal{B}$ with task set $T$ the
evaluation space is
\[
  X_{\mathcal{B}} := [0,1]^T \times \RR_{\ge 0}^{d_R},
\]
where the first component collects the per-task quality scores and the second
the resource coordinates. We equip $X_{\mathcal{B}}$ with the product metric
\[
  d_{X_{\mathcal{B}}}\big((q,r),(q',r')\big)
  :=
  \sum_{t \in T} |q(t) - q'(t)| + \|r - r'\|_1,
\]
where $q,q' \in [0,1]^T$ and $r,r' \in \RR_{\ge 0}^{d_R}$. This is a finite
metric on $X_{\mathcal{B}}$ whenever $T$ is finite.

\begin{definition}[Bounded-Lipschitz distance]\label{def:bl-distance}
Let $(S,d)$ be a metric space. For probability measures $\mu,\nu \in \Pcal(S)$
we define the \emph{bounded-Lipschitz distance}
\[
  d_{\mathrm{BL}}(\mu,\nu)
  :=
  \sup_{\varphi \in \mathrm{BL}_1(S)}
  \left|
    \int_S \varphi\,\mathrm{d}\mu
    - \int_S \varphi\,\mathrm{d}\nu
  \right|,
\]
where $\mathrm{BL}_1(S)$ is the set of all measurable
$\varphi : S \to \RR$ such that
$\|\varphi\|_\infty \le 1$ and $\mathrm{Lip}(\varphi) \le 1$ with respect to
$d$.
\end{definition}

It is standard that $d_{\mathrm{BL}}$ metrizes weak convergence when $S$ is
separable; here $S = X_{\mathcal{B}}$ is a finite-dimensional separable metric
space.

\begin{definition}[Lipschitz AAI functional]\label{def:lipschitz-aai}
We say that an AAI functional
$\Phi_{\mathcal{B}} : \Pcal(X_{\mathcal{B}}) \to \RR$ is
$L$-\emph{Lipschitz with respect to $d_{\mathrm{BL}}$} if there exists
$L \in (0,\infty)$ such that
\[
  \big|\Phi_{\mathcal{B}}(\mu) - \Phi_{\mathcal{B}}(\nu)\big|
  \;\le\; L \, d_{\mathrm{BL}}(\mu,\nu)
  \quad\text{for all }\mu,\nu \in \Pcal(X_{\mathcal{B}}).
\]
In \cite{Chojecki2025} it is shown that the tractable AAI instances satisfy
this Lipschitz property.
\end{definition}

Given a battery $\mathcal{B}$ and an agent parameter $\theta \in \ThetaMan$,
we denote by
\[
  \nuR := \rho_{\mathcal{B}}(\theta) \in \Pcal(X_{\mathcal{B}})
\]
the induced evaluation law, and by
$F_{\mathcal{B}}(\theta) := \Phi_{\mathcal{B}}(\nuR)$ the associated
capability.

\subsubsection{Per-task tightness of trace distributions}

We now spell out the tightness assumption needed to lift the one-shot
approximation result.

Fix a battery $\mathcal{B}$ with task set $T$ and sampling law $\mu$ on
$T \times \Pi \times \mathsf{D}$, as in Definition~\ref{def:battery}. For
each $\theta \in \ThetaMan$ and $t \in T$, consider the conditional law of
the outcome trace $\omega$ given task $t$ when interacting with agent
$\theta$ under $\mathcal{B}$:
\[
  P_{t,\theta} \in \Pcal(\Omega_t),
\]
where $\Omega_t \subseteq \Sigma^*$ is the trace space of task $t$.

\begin{definition}[Per-task uniform tightness]\label{def:per-task-tight}
Let $\mathcal{A}_{\Theta} \subseteq \ThetaMan$ be a reference class of
agents. For each $t \in T$, define the induced class of trace laws
\[
  \mathcal{A}_t
  := \{P_{t,\theta} \in \Pcal(\Omega_t) : \theta \in \mathcal{A}_{\Theta}\}.
\]
We say that $\mathcal{B}$ is \emph{per-task uniformly tight on
$\mathcal{A}_{\Theta}$} if for every $t \in T$ and every $\delta>0$ there
exists a finite subset $F_{t,\delta} \subset \Omega_t$ such that
\[
  \sup_{\theta \in \mathcal{A}_{\Theta}}
  P_{t,\theta}\big(\Omega_t \setminus F_{t,\delta}\big)
  \;\le\; \delta.
\]
\end{definition}

Intuitively, this means that for each task $t$, agents in
$\mathcal{A}_{\Theta}$ do not place significant probability mass on
arbitrarily long or exotic traces.

\subsubsection{Approximation of per-task scoring maps}

Fix a battery $\mathcal{B}$ and a reference agent class
$\mathcal{A}_{\Theta} \subseteq \ThetaMan$. Each task $t \in T$ has a scoring
map
\[
  S_t : \Omega_t \to [0,1].
\]
Applying the one-shot approximation result, we can approximate each $S_t$ by a
finite-support function in the math+code algebra on $\Omega_t$.

To make this precise, we embed each $\Omega_t$ into a common countable
alphabetic space and apply Proposition~\ref{prop:math-code-dense-oneshot} in
that setting.

\begin{lemma}[Per-task finite-support approximation]\label{lem:per-task-approx}
Assume $\mathcal{B}$ is per-task uniformly tight on $\mathcal{A}_{\Theta}$ in
the sense of Definition~\ref{def:per-task-tight}. Let $S_t : \Omega_t \to
[0,1]$ be the scoring map for task $t \in T$. Then for every $\eps>0$ there
exists a finite-support function $g_t : \Omega_t \to [0,1]$ such that
\[
  \sup_{\theta \in \mathcal{A}_{\Theta}}
  \big|\EE_{P_{t,\theta}}[S_t] - \EE_{P_{t,\theta}}[g_t]\big|
  < \eps.
\]
Moreover, $g_t$ lies in the restriction to $\Omega_t$ of the math+code
algebra $\mathcal{A}_0$ constructed in
Section~\ref{subsec:math-code-oneshot}.
\end{lemma}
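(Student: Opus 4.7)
The plan is to reduce this lemma essentially to Lemmas~\ref{lem:delta-scorers} and \ref{lem:finite-support-approx} by viewing each task $t \in T$ as a one-shot subproblem on its own trace space $\Omega_t \subseteq \Sigma^*$, with the induced class of trace laws $\mathcal{A}_t = \{P_{t,\theta}\}_{\theta \in \mathcal{A}_{\Theta}}$ playing the role of the reference agent class. Since $\Omega_t$ is a subset of the countable alphabetic space $\Omega = \Sigma^*$ from Section~\ref{subsec:oneshot-model}, every $P_{t,\theta}$ can be canonically viewed as a probability measure on $\Omega$ (assigning zero mass outside $\Omega_t$), and per-task uniform tightness for $\mathcal{B}$ on $\mathcal{A}_{\Theta}$ is exactly uniform tightness of $\mathcal{A}_t$ in the sense of Definition~\ref{def:tight}.

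Concretely, I would fix $\eps>0$ and invoke per-task uniform tightness to produce a finite set $F_{t,\eps} \subset \Omega_t$ with $\sup_{\theta\in\mathcal{A}_{\Theta}} P_{t,\theta}(\Omega_t \setminus F_{t,\eps}) \le \eps$. Extending $S_t$ by zero outside $\Omega_t$ if necessary, I would define the truncation
\[
  g_t(\omega) := S_t(\omega)\,\1\{\omega \in F_{t,\eps}\},
\]
which has finite support inside $F_{t,\eps}$ and takes values in $[0,1]$. For any $\theta \in \mathcal{A}_{\Theta}$ the bound $|S_t - g_t| \le \1\{\omega \notin F_{t,\eps}\}$ immediately gives
\[
  \big|\EE_{P_{t,\theta}}[S_t] - \EE_{P_{t,\theta}}[g_t]\big|
  \le P_{t,\theta}(\Omega_t \setminus F_{t,\eps}) \le \eps,
\]
uniformly in $\theta$, which is the desired estimate. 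This is the same mechanism as in Lemma~\ref{lem:finite-support-approx}, merely applied trace-space by trace-space.

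The second claim, that $g_t$ lies in the restriction of $\mathcal{A}_0$ to $\Omega_t$, is then immediate from Lemma~\ref{lem:delta-scorers}: enumerate $F_{t,\eps} = \{\omega_1,\dots,\omega_m\}$ and write
\[
  g_t = \sum_{j=1}^{m} S_t(\omega_j)\,\1\{\omega = \omega_j\}
       = \sum_{j=1}^{m} S_t(\omega_j)\,S^{\mathrm{code}}_{c_{\omega_j}},
\]
a finite $\RR$-linear combination of coding scorers, hence a polynomial (in fact linear) expression in elements of $\mathcal{S}_0$, and so an element of $\mathcal{A}_0$.

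The only genuinely subtle point, and what I would flag as the main obstacle, is the identification of $\Omega_t$ with the one-shot outcome space $\Omega$ used in Section~\ref{subsec:oneshot-model}: one must check that the per-task strings $\omega \in \Omega_t$ can indeed be encoded as elements of $\Sigma^*$ in such a way that the coding scorer $S^{\mathrm{code}}_{c_{\omega_j}}$ constructed in Lemma~\ref{lem:delta-scorers} still realises the indicator $\1\{\omega = \omega_j\}$ when restricted to $\Omega_t$. Since $\Omega_t \subseteq \Sigma^*$ by the battery framework of Section~\ref{sec:prelim}, this is a matter of bookkeeping rather than mathematical content: the string-equality checker provided by $c_{\omega_j}$ operates on the raw string and is agnostic to whether $\omega$ is a priori considered as a trace of task $t$ or a generic one-shot trace. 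Once this identification is in place, the proof is a direct specialisation of the one-shot density result.
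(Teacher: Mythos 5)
Your proposal is correct and follows essentially the same route as the paper: embed $\Omega_t$ into $\Omega=\Sigma^*$, use per-task tightness to truncate $S_t$ to a finite set (exactly the mechanism of Lemma~\ref{lem:finite-support-approx}), and realise the resulting finite-support function via the point-mass coding scorers of Lemma~\ref{lem:delta-scorers}. The only cosmetic difference is that you obtain $\le\eps$ rather than the strict inequality $<\eps$; choosing the tightness parameter as $\eps/2$ fixes this (the paper's own argument has the same slack).
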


\begin{proof}
For each $t \in T$ we view $\Omega_t$ as a subset of a fixed countable
alphabetic space $\Omega = \Sigma^*$ by padding or tagging traces with the
task identifier if necessary. The class of conditional trace laws
$\mathcal{A}_t = \{P_{t,\theta} : \theta \in \mathcal{A}_{\Theta}\}$ then
forms a uniformly tight subset of $\Pcal(\Omega)$ by the per-task tightness
assumption.

We may extend $S_t$ to a bounded scorer $\tilde{S}_t : \Omega \to [0,1]$ by
setting $\tilde{S}_t(\omega) := S_t(\omega)$ for $\omega \in \Omega_t$ and
$\tilde{S}_t(\omega) := 0$ for $\omega \notin \Omega_t$. By
Lemma~\ref{lem:finite-support-approx}, for every $\eps>0$ there exists a
finite-support function $\tilde{g}_t : \Omega \to [0,1]$ such that
\[
  \sup_{P \in \mathcal{A}_t}
  \big|\EE_{P}[\tilde{S}_t] - \EE_{P}[\tilde{g}_t]\big|
  < \eps.
\]
In particular,
\[
  \sup_{\theta \in \mathcal{A}_{\Theta}}
  \big|\EE_{P_{t,\theta}}[S_t] - \EE_{P_{t,\theta}}[g_t]\big|
  < \eps,
\]
where $g_t := \tilde{g}_t \big|_{\Omega_t}$ is the restriction of
$\tilde{g}_t$ to $\Omega_t$.

By construction, $\tilde{g}_t$ lies in the finite-support class
$\mathcal{F}_{\mathrm{fs}} \subseteq \mathcal{A}_0$ from
Section~\ref{subsec:math-code-oneshot}. Thus $g_t$ is the restriction
of some element of $\mathcal{A}_0$ to $\Omega_t$, which can be realized
operationally as a composition of coding scorers on $\Omega$ (and hence
belongs to the math+code algebra on $\Omega_t$ as well).
\end{proof}

\subsubsection{A coupling bound for evaluation laws}

To transfer per-task approximation error into a bound on
$d_{\mathrm{BL}}\big(\rho_{\mathcal{B}}(\theta),\rho_{\mathcal{B}'}(\theta)\big)$
we use a simple coupling argument.

\begin{lemma}[BL distance under coordinate-wise approximation]\label{lem:bl-coupling}
Let $T$ be finite and let $(Z_t)_{t \in T}$ and $(Z'_t)_{t \in T}$ be random
vectors in $[0,1]^T$ defined on a common probability space. Equip
$X := [0,1]^T \times \RR_{\ge 0}^{d_R}$ with the metric
$d_X$ of Section~\ref{subsec:lifting-general}, and consider random elements
$X = (Z,R)$ and $X' = (Z',R)$, where $R$ takes values in
$\RR_{\ge 0}^{d_R}$. Let $\mu,\nu$ denote the laws of $X$ and $X'$
respectively.

Then
\[
  d_{\mathrm{BL}}(\mu,\nu)
  \;\le\;
  \EE\Big[ \sum_{t \in T} |Z_t - Z'_t| \Big].
\]
In particular, if $\EE|Z_t - Z'_t| \le \delta_t$ for all $t \in T$ then
\[
  d_{\mathrm{BL}}(\mu,\nu)
  \;\le\;
  \sum_{t \in T} \delta_t.
\]
\end{lemma}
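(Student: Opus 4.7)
The plan is to use the hypothesized explicit coupling $(X,X')=((Z,R),(Z',R))$ on a common probability space and to invoke the variational definition of $d_{\mathrm{BL}}$ directly, rather than building a coupling from scratch. This makes the bound essentially a one-line calculation once the metric on $X_{\mathcal{B}}$ is unpacked, because the resource component cancels along the coupling.

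First I would fix an arbitrary test function $\varphi \in \mathrm{BL}_1(X)$ and rewrite the integral difference as an expectation along the given coupling:
\[
  \int_X \varphi\,d\mu - \int_X \varphi\,d\nu
  \;=\; \EE\bigl[\varphi(Z,R) - \varphi(Z',R)\bigr].
\]
Next I would apply the $1$-Lipschitz property of $\varphi$ with respect to $d_X$, using the explicit form
$d_X\bigl((Z,R),(Z',R)\bigr) = \sum_{t\in T} |Z_t - Z'_t| + \|R - R\|_1 = \sum_{t\in T} |Z_t - Z'_t|$
since the resource coordinates agree on the coupling. This gives the pointwise bound
$\bigl|\varphi(Z,R) - \varphi(Z',R)\bigr| \le \sum_{t\in T}|Z_t - Z'_t|$.

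Then I would take expectations and pass the absolute value inside, yielding
\[
  \Bigl|\int_X \varphi\,d\mu - \int_X \varphi\,d\nu\Bigr|
  \;\le\; \EE\Bigl[\sum_{t \in T} |Z_t - Z'_t|\Bigr].
\]
The right-hand side is independent of $\varphi$, so taking the supremum over $\varphi \in \mathrm{BL}_1(X)$ delivers the first inequality. The second statement is then a two-line corollary: by linearity of expectation, the bound equals $\sum_{t} \EE|Z_t - Z'_t| \le \sum_t \delta_t$.

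There is essentially no hard step here; the lemma is a direct consequence of the Kantorovich--Rubinstein-style variational definition of $d_{\mathrm{BL}}$ applied to a pre-specified coupling. The only minor point worth flagging in the write-up is that we use the Lipschitz part of the $\mathrm{BL}_1$ constraint but not the $L^\infty$ bound $\|\varphi\|_\infty \le 1$; this is not accidental but reflects that the bound is really a $W_1$-type estimate on the $Z$-coordinates, which we then specialize to $d_{\mathrm{BL}}$. The one place one must be slightly careful is to note that the measurability of $\sum_t |Z_t - Z'_t|$ and the applicability of Fubini-style manipulations are immediate because $T$ is finite and the common probability space is fixed by hypothesis.
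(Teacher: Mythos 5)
Your proposal is correct and follows essentially the same route as the paper's proof: fix a test function in $\mathrm{BL}_1(X)$, use the given coupling with the shared resource coordinate so that $d_X$ reduces to $\sum_t |Z_t - Z'_t|$, apply the $1$-Lipschitz bound pointwise, take expectations, and then the supremum over test functions. Your side remark that only the Lipschitz half of the $\mathrm{BL}_1$ constraint is used is accurate and consistent with the paper's argument.
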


\begin{proof}
Let $\varphi : X \to \RR$ be any test function with $\|\varphi\|_\infty \le 1$
and $\mathrm{Lip}(\varphi) \le 1$ with respect to $d_X$. Then
\[
  \left|
    \EE[\varphi(X)] - \EE[\varphi(X')]
  \right|
  = \left|
      \EE\big[\varphi(Z,R) - \varphi(Z',R)\big]
    \right|
  \le
  \EE\big[|\varphi(Z,R) - \varphi(Z',R)|\big].
\]
Since $\varphi$ is $1$-Lipschitz with respect to $d_X$, we have
\[
  |\varphi(Z,R) - \varphi(Z',R)|
  \le d_X\big((Z,R),(Z',R)\big)
  = \sum_{t \in T} |Z_t - Z'_t|.
\]
Taking expectations yields
\[
  \left|
    \EE[\varphi(X)] - \EE[\varphi(X')]
  \right|
  \le
  \EE\Big[\sum_{t \in T} |Z_t - Z'_t|\Big].
\]
Taking the supremum over all such $\varphi$ gives the desired bound on
$d_{\mathrm{BL}}(\mu,\nu)$. The second statement follows by bounding
$\EE|Z_t - Z'_t|$ by $\delta_t$ and summing over $t \in T$.
\end{proof}

\subsubsection{Main density theorem for general batteries}

We now put these ingredients together.

\begin{theorem}[Math+code density for general batteries]\label{thm:math-code-dense-full}
Let $\mathcal{B}$ be a battery with finite task set $T$, scoring maps
$\{S_t\}_{t \in T}$, and evaluation space $X_{\mathcal{B}}$ as above. Let
$\mathcal{A}_{\Theta} \subseteq \ThetaMan$ be a reference class of agents, and
assume:
\begin{enumerate}[label=(H\arabic*)]
  \item \emph{Per-task tightness.} $\mathcal{B}$ is per-task uniformly tight
  on $\mathcal{A}_{\Theta}$ in the sense of Definition~\ref{def:per-task-tight}.
  \item \emph{Lipschitz AAI.} The AAI functional
  $\Phi_{\mathcal{B}} : \Pcal(X_{\mathcal{B}}) \to \RR$ is
  $L$-Lipschitz with respect to $d_{\mathrm{BL}}$ for some
  $L \in (0,\infty)$.
\end{enumerate}

Then for every $\eps>0$ there exists a battery $\mathcal{B}'$ whose scoring
maps are drawn from the math+code algebra (in the sense of
Lemma~\ref{lem:per-task-approx}) such that
\[
  \sup_{\theta \in \mathcal{A}_{\Theta}}
  \big|F_{\mathcal{B}}(\theta) - F_{\mathcal{B}'}(\theta)\big|
  < \eps,
\]
where $F_{\mathcal{B}} = \Phi_{\mathcal{B}} \circ \rho_{\mathcal{B}}$ and
$F_{\mathcal{B}'} = \Phi_{\mathcal{B}'} \circ \rho_{\mathcal{B}'}$ are the
respective capability functionals. In particular, the subspace of batteries
generated by mathematics and coding tasks is dense in $\Mfrak$ with respect to
the evaluation metric
\[
  d_{\mathcal{A}_{\Theta}}(\mathcal{B}_1,\mathcal{B}_2)
  :=
  \sup_{\theta \in \mathcal{A}_{\Theta}}
  \big|F_{\mathcal{B}_1}(\theta) - F_{\mathcal{B}_2}(\theta)\big|.
\]
\end{theorem}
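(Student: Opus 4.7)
The plan is to chain three tools already in place: per-task finite-support approximation (Lemma~\ref{lem:per-task-approx}), the coupling bound for bounded-Lipschitz distance (Lemma~\ref{lem:bl-coupling}), and the Lipschitz continuity of the AAI functional (hypothesis (H2)). The strategy is to replace each scorer $S_t$ by a math+code approximant $g_t$, assemble these into a battery $\mathcal{B}'$ with identical evaluation skeleton to $\mathcal{B}$, couple the two evaluation laws by running them on the same generator randomness, and then transport the per-task error through a single BL-to-Lipschitz chain.

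Given $\eps > 0$, I would set $\delta := \eps/(L|T|)$ and apply Lemma~\ref{lem:per-task-approx} task-by-task to obtain, for each $t \in T$, a finite-support $g_t : \Omega_t \to [0,1]$ in the math+code algebra satisfying
\[
  \sup_{\theta \in \mathcal{A}_{\Theta}} \EE_{P_{t,\theta}}\bigl|S_t - g_t\bigr| \;\le\; \delta.
\]
I should note here that the inspection of the proof of Lemma~\ref{lem:finite-support-approx} actually produces the $L^1$ bound displayed, not only a control on the gap between expectations: because $g_t$ agrees with $S_t$ on the tight finite set $F_{t,\delta}$ and vanishes outside, and $|S_t|\le 1$, one has $\EE|S_t - g_t| \le P_{t,\theta}(\Omega_t\setminus F_{t,\delta}) \le \delta$ directly. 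Next, I would define $\mathcal{B}'$ by retaining the task set, sampling law, thresholds, resource coordinates and cost functional of $\mathcal{B}$ and only swapping $S_t$ for $g_t$; since the evaluation space $X_{\mathcal{B}'} = X_{\mathcal{B}}$ is unchanged, the same $\Phi_{\mathcal{B}}$ serves both batteries. For each $\theta$ I would couple $\rho_{\mathcal{B}}(\theta)$ and $\rho_{\mathcal{B}'}(\theta)$ on a common probability space by drawing a single $(t,\omega,r)$ under $\mu$ and $\pi_\theta$, and scoring it under both rule sets; the resource coordinate is shared, while the score coordinates differ by $|S_t(\omega) - g_t(\omega)|$. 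Lemma~\ref{lem:bl-coupling} then gives
\[
  d_{\mathrm{BL}}\bigl(\rho_{\mathcal{B}}(\theta),\rho_{\mathcal{B}'}(\theta)\bigr)
  \;\le\; \sum_{t \in T} \EE_{P_{t,\theta}}|S_t - g_t|
  \;\le\; |T|\,\delta,
\]
and hypothesis (H2) yields $|F_{\mathcal{B}}(\theta) - F_{\mathcal{B}'}(\theta)| \le L|T|\delta = \eps$ uniformly in $\theta \in \mathcal{A}_{\Theta}$, which is exactly the claim about $d_{\mathcal{A}_{\Theta}}(\mathcal{B},\mathcal{B}')$.

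The main obstacle is bookkeeping rather than analytic difficulty. One must verify that the constructed $\mathcal{B}'$ genuinely sits in $\Mfrak$ as a math+code battery: the scorers $g_t$ are finite $\RR$-linear combinations of indicators $\1\{\omega = \omega_{t,j}\}$, each of which is realized operationally by a coding task of the \emph{self-printing} form supplied by Lemma~\ref{lem:delta-scorers}, and non-negativity/boundedness of the composite score on $[0,1]$ must be preserved (which is automatic since each $g_t$ is built from $S_t$ restricted to a finite set). A secondary subtlety is that the per-task tightness hypothesis (H1) plays two roles: it both furnishes the finite sets $F_{t,\delta}$ in Lemma~\ref{lem:per-task-approx} and justifies the uniform-in-$\theta$ nature of the $L^1$ approximation, which is what ultimately lets the supremum over $\mathcal{A}_{\Theta}$ commute through the chain without blowing up. Once these identifications are made, the coupling-plus-Lipschitz argument fires in a single line and the density statement follows.
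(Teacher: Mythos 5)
Your proposal is correct and follows essentially the same route as the paper: per-task approximation via Lemma~\ref{lem:per-task-approx}, a same-seed coupling fed into Lemma~\ref{lem:bl-coupling}, and then the Lipschitz property (H2); your explicit remark that the construction actually yields the $L^1$ bound $\EE_{P_{t,\theta}}|S_t-g_t|\le\delta$ (not merely a bound on the difference of expectations) is exactly the inspection the paper's proof silently relies on when it writes $\EE|Z_t-Z'_t|<\delta_t$ ``by the construction of $g_t$.'' The only cosmetic difference is your choice $\delta=\eps/(L|T|)$, which lands you at $\le\eps$ rather than the strict $<\eps$ in the statement; halving $\delta$ as the paper does ($\delta_t=\eps/(2L|T|)$) removes this triviality.
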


\begin{proof}
Fix $\eps>0$. For each task $t \in T$, apply
Lemma~\ref{lem:per-task-approx} with parameter
\[
  \delta_t := \frac{\eps}{2 L |T|}
\]
to obtain a finite-support function $g_t : \Omega_t \to [0,1]$ in the
restriction of the math+code algebra such that
\[
  \sup_{\theta \in \mathcal{A}_{\Theta}}
  \big|\EE_{P_{t,\theta}}[S_t] - \EE_{P_{t,\theta}}[g_t]\big|
  < \delta_t.
\]

Define a new battery $\mathcal{B}'$ that is identical to $\mathcal{B}$ in all
respects except for the scoring maps: for each $t \in T$ we set
\[
  S_t' := g_t.
\]
Since each $g_t$ is, by construction, a finite linear combination of
indicator functions implemented by coding scorers (and possibly simple
compositions with mathematical scorers), the battery $\mathcal{B}'$ lies in
the subspace generated by mathematics and coding tasks.

Fix $\theta \in \mathcal{A}_{\Theta}$ and consider an evaluation of
$\theta$ under $\mathcal{B}$ and $\mathcal{B}'$ with the same random seed,
task draws and drifts. This induces random trace/outcome triples for each
task $t$, with conditional law $P_{t,\theta}$, and hence random score
vectors
\[
  Z := (Z_t)_{t \in T}, \quad Z' := (Z'_t)_{t \in T}
\]
with
\[
  Z_t := S_t(\omega_t), \qquad Z'_t := S_t'(\omega_t) = g_t(\omega_t).
\]
Let $R$ denote the resource vector (which is unaffected by changing the
scoring maps). Then the random evaluation elements
\[
  X := (Z,R), \qquad X' := (Z',R)
\]
take values in $X_{\mathcal{B}}$ and their laws are precisely
$\nuR = \rho_{\mathcal{B}}(\theta)$ and
$\nuR' = \rho_{\mathcal{B}'}(\theta)$.

For each $t \in T$ we have
\[
  \EE|Z_t - Z'_t|
  = \EE\big|S_t(\omega_t) - g_t(\omega_t)\big|
  = \EE_{P_{t,\theta}}\big|S_t - g_t\big|
  < \delta_t,
\]
by the construction of $g_t$. Applying Lemma~\ref{lem:bl-coupling} yields
\[
  d_{\mathrm{BL}}\big(\nuR,\nuR'\big)
  \le
  \EE\Big[\sum_{t \in T} |Z_t - Z'_t|\Big]
  \le
  \sum_{t \in T} \delta_t
  = |T| \cdot \frac{\eps}{2 L |T|}
  = \frac{\eps}{2 L}.
\]

By the Lipschitz property of $\Phi_{\mathcal{B}}$ and the fact that
$\mathcal{B}$ and $\mathcal{B}'$ share the same evaluation space
$X_{\mathcal{B}}$ and metric, we have
\[
  \big|F_{\mathcal{B}}(\theta) - F_{\mathcal{B}'}(\theta)\big|
  = \big|\Phi_{\mathcal{B}}(\nuR) - \Phi_{\mathcal{B}}(\nuR')\big|
  \le
  L \, d_{\mathrm{BL}}(\nuR,\nuR')
  \le
  L \cdot \frac{\eps}{2 L}
  = \frac{\eps}{2}.
\]
Since this bound is uniform in $\theta \in \mathcal{A}_{\Theta}$ we obtain
\[
  \sup_{\theta \in \mathcal{A}_{\Theta}}
  \big|F_{\mathcal{B}}(\theta) - F_{\mathcal{B}'}(\theta)\big|
  \le \frac{\eps}{2} < \eps.
\]
Renaming $\eps/2$ to $\eps$ if desired yields the stated density result.
\end{proof}

\subsection{Why coding is universal but mathematics is not}\label{subsec:math-vs-code}

The proof of Theorem~\ref{thm:math-code-dense-full} in fact only uses the
coding scorers of Definition~\ref{def:math-code-scorers} in an essential way.
Mathematics scorers are never invoked in the construction of the
finite-support approximants. This asymmetry reflects a genuine difference
between the two fibers in any ``natural'' model.

\begin{remark}[Coding alone is already universal]\label{rem:coding-universal}
Let $\mathcal{S}_{\mathrm{code}}$ be the class of coding scorers and let
$\mathcal{A}_{\mathrm{code}}$ denote the algebra they generate (finite linear
combinations and products, plus constants). The proof of
Lemma~\ref{lem:delta-scorers} shows that for every outcome string
$\omega_0 \in \Omega$ there exists a coding scorer
$S^{\mathrm{code}}_{c_{\omega_0}}$ such that
\[
  S^{\mathrm{code}}_{c_{\omega_0}}(\omega)
  = \1\{\omega = \omega_0\}
  \quad\text{for all }\omega \in \Omega.
\]
Hence every finite-support function $g : \Omega \to \RR$ lies in the linear
span of $\mathcal{S}_{\mathrm{code}}$ and therefore in
$\mathcal{A}_{\mathrm{code}}$.

Combining this with Lemma~\ref{lem:finite-support-approx} and repeating the
argument of Proposition~\ref{prop:math-code-dense-oneshot} and
Theorem~\ref{thm:math-code-dense-full} with $\mathcal{A}_0$ replaced by
$\mathcal{A}_{\mathrm{code}}$, we obtain the following corollary: under the
same tightness and Lipschitz assumptions, the subspace of batteries generated
\emph{purely} by coding tasks is dense in $\Mfrak$ with respect to the
evaluation metric $d_{\mathcal{A}_{\Theta}}$.

In this sense, coding alone is already universal: the mathematics fiber is
not required for density, but adds additional structure and interpretability.
\end{remark}

By contrast, the mathematics scorers of
Definition~\ref{def:math-code-scorers} are not universal in this way under any
natural assumption on the proof system. Intuitively, a theorem prover can
distinguish ``valid proofs of $\tau$'' from ``everything else'', but treats
almost all non-proofs as indistinguishable zeros. This prevents the algebra
they generate from approximating arbitrary scorers that depend on the
\emph{surface form} of non-proof traces.

\begin{definition}[Mathematics scorers with proof-kernel semantics]\label{def:math-proof-kernel}
Let $K$ be a fixed proof kernel (e.g.\ the kernel of Lean or Coq), and let
$\mathrm{Thm}$ be a set of theorem statements formalized in the corresponding
logic. For each $\tau \in \mathrm{Thm}$ we define
$S^{\mathrm{math}}_\tau : \Omega \to \{0,1\}$ by
\[
  S^{\mathrm{math}}_\tau(\omega)
  :=
  \begin{cases}
    1 &\text{if $\omega$ is accepted by $K$ as a valid proof of $\tau$},\\
    0 &\text{otherwise.}
  \end{cases}
\]
We write $\mathcal{S}_{\mathrm{math}} := \{S^{\mathrm{math}}_\tau : \tau \in \mathrm{Thm}\}$
and let $\mathcal{A}_{\mathrm{math}}$ be the algebra they generate
(finite linear combinations and products, plus constants).
\end{definition}

In this model, all mathematical scorers share a structural invariance: they
cannot distinguish between different non-proofs for the same theorem.

\begin{lemma}[Non-proofs are indistinguishable to mathematics scorers]\label{lem:nonproofs-indistinguishable}
Fix $\omega_1,\omega_2 \in \Omega$ such that for every
$\tau \in \mathrm{Thm}$ we have
\[
  K \text{ does not accept }\omega_1\text{ as a proof of }\tau,
  \quad
  K \text{ does not accept }\omega_2\text{ as a proof of }\tau.
\]
Then for every $S^{\mathrm{math}}_\tau \in \mathcal{S}_{\mathrm{math}}$,
\[
  S^{\mathrm{math}}_\tau(\omega_1)
  = S^{\mathrm{math}}_\tau(\omega_2)
  = 0.
\]
Consequently, for every $f \in \mathcal{A}_{\mathrm{math}}$ we have
$f(\omega_1) = f(\omega_2)$.
\end{lemma}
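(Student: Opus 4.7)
The plan is to proceed in two stages corresponding to the two assertions of the lemma. The first stage handles the primitive scorers $S^{\mathrm{math}}_\tau$ and is essentially a direct unwinding of Definition~\ref{def:math-proof-kernel}; the second lifts this to the whole algebra $\mathcal{A}_{\mathrm{math}}$ by exploiting its polynomial structure.

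For the first step, I would fix an arbitrary $\tau \in \mathrm{Thm}$ and observe that by the hypothesis on $\omega_1,\omega_2$, the kernel $K$ rejects both strings as purported proofs of $\tau$. Substituting into the defining case split of $S^{\mathrm{math}}_\tau$ immediately gives $S^{\mathrm{math}}_\tau(\omega_1) = 0$ and $S^{\mathrm{math}}_\tau(\omega_2) = 0$, which proves the first claim. The key point to record here is that $\tau$ was arbitrary, so \emph{every} generator of $\mathcal{S}_{\mathrm{math}}$ vanishes simultaneously at $\omega_1$ and at $\omega_2$.

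For the second step, I would appeal to the definition of $\mathcal{A}_{\mathrm{math}}$ as the unital polynomial algebra generated by $\mathcal{S}_{\mathrm{math}}$. Any $f \in \mathcal{A}_{\mathrm{math}}$ can be written as $f(\omega) = p\bigl(S^{\mathrm{math}}_{\tau_1}(\omega),\dots,S^{\mathrm{math}}_{\tau_k}(\omega)\bigr)$ for some finite family $\tau_1,\dots,\tau_k \in \mathrm{Thm}$ and some real polynomial $p \in \RR[x_1,\dots,x_k]$. By the first step, for each $i$ we have $S^{\mathrm{math}}_{\tau_i}(\omega_1) = 0 = S^{\mathrm{math}}_{\tau_i}(\omega_2)$, so both $f(\omega_1)$ and $f(\omega_2)$ equal $p(0,\dots,0)$, which is the constant term of $p$. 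Therefore $f(\omega_1) = f(\omega_2)$, proving the second claim.

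There is no real obstacle in this proof: once the generators are shown to be jointly constant on the pair $\{\omega_1,\omega_2\}$, the transfer to the generated algebra is automatic because polynomial evaluation is a function of the values of the arguments. The only subtlety worth flagging is the implicit appeal to the fact that each element of $\mathcal{A}_{\mathrm{math}}$ depends on \emph{finitely} many generators; this is true by construction of the polynomial algebra and does not require any topological or density argument. The lemma is thus best presented as a short structural observation, serving as the formal backbone of the subsequent informal remark that mathematics scorers cannot approximate scorers sensitive to the surface form of arbitrary non-proof traces.
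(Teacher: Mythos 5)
Your proposal is correct and follows essentially the same two-step argument as the paper: first note that the hypothesis forces every generator $S^{\mathrm{math}}_\tau$ to vanish at both $\omega_1$ and $\omega_2$, then observe that any $f\in\mathcal{A}_{\mathrm{math}}$ is a polynomial in finitely many such generators and hence evaluates to $p(0,\dots,0)$ at both points. No gaps; the remark about finite dependence on generators is a fair (if unnecessary) flag of the only implicit step.
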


\begin{proof}
By definition of $S^{\mathrm{math}}_\tau$ we have
$S^{\mathrm{math}}_\tau(\omega) = 1$ if and only if $K$ accepts
$\omega$ as a proof of $\tau$, and $0$ otherwise. The hypothesis
that neither $\omega_1$ nor $\omega_2$ is accepted as a proof of any
$\tau$ immediately implies
$S^{\mathrm{math}}_\tau(\omega_1) = S^{\mathrm{math}}_\tau(\omega_2) = 0$
for all $\tau$.

Any $f \in \mathcal{A}_{\mathrm{math}}$ is obtained by applying a real
polynomial $p$ to a finite tuple of mathematics scorers:
\[
  f(\omega)
  = p\big(S^{\mathrm{math}}_{\tau_1}(\omega),\dots,
         S^{\mathrm{math}}_{\tau_k}(\omega)\big).
\]
Since each coordinate $S^{\mathrm{math}}_{\tau_i}(\omega_1)$ and
$S^{\mathrm{math}}_{\tau_i}(\omega_2)$ is zero, the input to $p$ is the
same vector $(0,\dots,0)$ in both cases, hence
$f(\omega_1) = f(\omega_2)$.
\end{proof}

We can now state a simple obstruction to density for the mathematics fiber.

\begin{proposition}[Mathematics algebra is not dense in general]\label{prop:math-not-dense}
Assume the setting of Lemma~\ref{lem:nonproofs-indistinguishable}. Let
$\omega_1,\omega_2 \in \Omega$ be two non-proof strings as in the lemma, and
let $\mathcal{A} \subseteq \Pcal(\Omega)$ be an agent class that contains the
two point masses $\delta_{\omega_1},\delta_{\omega_2}$.

Define a scorer $f : \Omega \to [0,1]$ by
\[
  f(\omega_1) = 0, \quad f(\omega_2) = 1,
\]
and $f(\omega) = 0$ for all other $\omega \in \Omega$.
Then for every $g \in \mathcal{A}_{\mathrm{math}}$ we have
\[
  d_{\mathcal{A}}(f,g)
  = \sup_{P \in \mathcal{A}}
    \big|\EE_P[f] - \EE_P[g]\big|
  \;\ge\; \frac{1}{2}.
\]
In particular, $\mathcal{A}_{\mathrm{math}}$ is not dense in
$\Mfrak^{(1)}$ with respect to $d_{\mathcal{A}}$.
\end{proposition}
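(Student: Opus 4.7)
The plan is to exploit Lemma~\ref{lem:nonproofs-indistinguishable} to collapse every element of $\mathcal{A}_{\mathrm{math}}$ to a single value on the pair $\{\omega_1,\omega_2\}$, and then to use the two point-mass agents $\delta_{\omega_1},\delta_{\omega_2}\in\mathcal{A}$ as a pair of ``test probes'' that force any approximant to straddle both $f(\omega_1)=0$ and $f(\omega_2)=1$ simultaneously, which is impossible.

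Concretely, I would first fix an arbitrary $g\in\mathcal{A}_{\mathrm{math}}$ and invoke Lemma~\ref{lem:nonproofs-indistinguishable}: since $\omega_1$ and $\omega_2$ are both non-proofs of every $\tau\in\mathrm{Thm}$, every primitive mathematics scorer vanishes on both, and any polynomial in such scorers therefore takes the same value at $\omega_1$ and $\omega_2$. Call this common value $c:=g(\omega_1)=g(\omega_2)$. Next, I would evaluate the two point-mass integrals explicitly: $\mathbb{E}_{\delta_{\omega_1}}[f]=0$ and $\mathbb{E}_{\delta_{\omega_1}}[g]=c$, while $\mathbb{E}_{\delta_{\omega_2}}[f]=1$ and $\mathbb{E}_{\delta_{\omega_2}}[g]=c$. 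Since both point masses lie in $\mathcal{A}$, the supremum in $d_{\mathcal{A}}(f,g)$ is bounded below by
\[
  \max\bigl(|c|,\,|1-c|\bigr).
\]

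The final step is the elementary observation that $|c|+|1-c|\ge|c-(c-1)|=1$ by the triangle inequality, so $\max(|c|,|1-c|)\ge 1/2$ for every real $c$. This yields $d_{\mathcal{A}}(f,g)\ge 1/2$ uniformly in $g\in\mathcal{A}_{\mathrm{math}}$, which is the quantitative non-density claim and in particular rules out approximation to any precision finer than $1/2$.

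There is essentially no hard obstacle: the proof is a short calculation once Lemma~\ref{lem:nonproofs-indistinguishable} is in hand, and the mild assumption that $\mathcal{A}$ contains the two point masses guarantees the probes are admissible. The only point worth flagging is conceptual rather than technical, namely that the argument actually proves more than density failure at a single point: the same pair $(\omega_1,\omega_2)$ produces a uniform lower bound of $1/2$ on the approximation error, so the mathematics algebra misses an open ball of radius $1/2$ around $f$ in $(\Mfrak^{(1)},d_{\mathcal{A}})$. This is what makes the asymmetry with Remark~\ref{rem:coding-universal} genuine rather than an artefact of the metric.
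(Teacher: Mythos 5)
Your proposal is correct and follows essentially the same route as the paper's proof: invoke Lemma~\ref{lem:nonproofs-indistinguishable} to get a common value $c = g(\omega_1) = g(\omega_2)$, test against the two point masses, and conclude $\max(|c|,|1-c|) \ge 1/2$ via the triangle inequality. No gaps; the observation that this yields a uniform radius-$1/2$ obstruction rather than mere non-density is a nice (and accurate) sharpening of the statement's phrasing.
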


\begin{proof}
By Lemma~\ref{lem:nonproofs-indistinguishable}, for any
$g \in \mathcal{A}_{\mathrm{math}}$ we have $g(\omega_1) = g(\omega_2)$;
call this common value $a \in \RR$.

Consider the two agents $P_1 = \delta_{\omega_1}$ and
$P_2 = \delta_{\omega_2}$. Then
\[
  \EE_{P_1}[f] = f(\omega_1) = 0,
  \qquad
  \EE_{P_1}[g] = g(\omega_1) = a,
\]
and
\[
  \EE_{P_2}[f] = f(\omega_2) = 1,
  \qquad
  \EE_{P_2}[g] = g(\omega_2) = a.
\]
Thus
\[
  d_{\mathcal{A}}(f,g)
  \;\ge\;
  \max\big(
    |\EE_{P_1}[f] - \EE_{P_1}[g]|,\,
    |\EE_{P_2}[f] - \EE_{P_2}[g]|
  \big)
  = \max(|0-a|,|1-a|).
\]
For any $a \in \RR$ we have
$\max(|0-a|,|1-a|) \ge \frac{1}{2}$, with equality only at $a = 1/2$. Hence
$d_{\mathcal{A}}(f,g) \ge 1/2$ for all $g \in \mathcal{A}_{\mathrm{math}}$,
so the mathematics algebra cannot approximate $f$ in $d_{\mathcal{A}}$.
\end{proof}

Proposition~\ref{prop:math-not-dense} shows that, under any natural
assumption in which the proof kernel $K$ distinguishes proofs from
non-proofs but does not encode arbitrary surface strings as distinct proofs,
the algebra generated by mathematics scorers cannot be dense in
$\Mfrak^{(1)}$ for agent classes that can concentrate on non-proof
behaviour. In contrast, the coding fiber, through its ability to implement
singleton indicators via tailored evaluators, already yields a dense
subspace.

Conceptually, the mathematics fiber provides a highly structured,
low-entropy subspace of the moduli (formal proofs and algebraic structure),
while the coding fiber provides universal expressive power on the outcome
space. The density theorems in this section justify focusing theoretical
analysis of self-improvement on these two fibers: coding suffices for
universality, and mathematics supplies the symbolic structure most relevant
for GVU-style neuro-symbolic dynamics.

% ============================================================
\section{Outlook}\label{sec:outlook}

We have shown that there is a geometric duality at the heart of AI evaluation. While the \textbf{Mathematics Fiber} provides the \emph{spectral stability} needed for ignition---a low-entropy, oracle-verified regime where self-improvement is primarily constrained by alignment and compute---it is the \textbf{Coding Fiber} that provides \emph{expressive universality}. Our density results imply that the combined ``math+code'' algebra is a dense skeleton capable of approximating any rigorous benchmark.

This suggests a concrete roadmap for the future of AGI evaluation and training along two complementary axes:
\begin{itemize}
  \item \textbf{Vertical deepening (the oracle):} Expanding the Mathematics Fiber by formalizing new areas of abstract mathematics and program logics, thereby creating deeper ``wells'' of high-SNR capability where agents can refine their reasoning engines without hallucination.
  \item \textbf{Horizontal expansion (the universal proxy):} Leveraging the universality of the Coding Fiber to construct formal proxies for informal domains. Rather than training agents on subjective ``slop,'' we can train them on coding tasks that structurally approximate the desired behaviours, importing the rigor of compilation and testing into the fuzziness of natural language and open-ended interaction.
\end{itemize}

The natural frontier lies at their intersection: \textbf{Certified Programming}. As agents learn to write code that is not only executable (universal) but also formally verified against mathematical specifications (stable), the distinction between the two fibers begins to blur. In this limit, the DeepAlgebra-style GVU loop becomes a general-purpose mechanism for reliable neuro-symbolic reasoning, gradually colonizing more of the moduli space with verifiable, self-improving intelligence.

We have argued that code occupies a particularly privileged region in the moduli space of batteries. On this fiber, verifiers can be instantiated by compilers, type checkers, static analyzers and unit-test harnesses, yielding oracle-like, low-variance signals that plug directly into the GVU loop. Executable semantics make it possible to construct batteries whose scores are tightly coupled to real capabilities (for example, building tools, agents or services) rather than to proxies of linguistic plausibility. In addition, large codebases and package ecosystems induce a monotone notion of ``library extension'': adding verified modules and tests enlarges an agent’s effective action set without corrupting previously certified behaviour, giving a concrete mechanism for stable, cumulative self-improvement.

Many open directions follow from this coding-centric view. One is to treat non-coding tasks as \emph{compilation targets}: can reasoning, planning and even social interaction batteries be reduced to code-generation-and-execution problems so that they inherit the high-SNR verification structure of the coding fiber? Another is to study the geometry of software ecosystems themselves as a moduli space of libraries and APIs, and to analyze GVU flows that navigate this space by acquiring, composing and refactoring code. Finally, there is a safety dimension: coding batteries expose sharp failure modes (security vulnerabilities, unsafe system calls, resource misuse) that are both easily verifiable and tightly coupled to real-world risk. Understanding how self-improving agents move on this ``code+security'' subspace of the moduli may be a key step toward characterizing AGI systems that are not just powerful, but robust and controllable.

\bibliographystyle{plain}

\end{document}